\theoremstyle{plain}
\newtheorem{theorem}{Theorem}
\definecolor{commentgreen}{rgb}{0, 0.5, 0}
\let\oldnl\nl
\newcommand{\nonl}{\renewcommand{\nl}{\let\nl\oldnl}}
\newcommand*\blackcircled[1]{\tikz[baseline=(char.base)]{
		\node[shape=circle,draw,fill=black,inner sep=0pt] (char) {\textcolor{white}{#1}};}}
\theoremstyle{definition}
\newtheorem*{defi*}{Definition}
\date{}
\newenvironment{denseitemize}{
	\begin{itemize}[topsep=2pt, partopsep=0pt, leftmargin=1.5em]
		\setlength{\itemsep}{2pt}
		\setlength{\parskip}{0pt}
		\setlength{\parsep}{0pt}
	}{\end{itemize}}
\begin{document}

	\date{}

  \title{\Large \bf Kareus: Joint Reduction of Dynamic and Static Energy in Large Model Training}

	\author{
		\rm Ruofan Wu \qquad Jae-Won Chung \qquad Mosharaf Chowdhury
		\\
		\itshape{University of Michigan}
		} %

  \pagestyle{plain}

  \pagenumbering{gobble} %

  \maketitle

  \begin{abstract}
  The computing demand of AI is growing at an unprecedented rate, but energy supply is not keeping pace.
As a result, energy has become an expensive and contended resource that requires explicit management and optimization.
Although recent works have made significant progress in large model training optimization, they focus on optimizing either dynamic or static energy consumption.

We find that fine-grained kernel scheduling and frequency scaling \emph{jointly} and \emph{interdependently} impact both dynamic and static energy consumption.
Based on this finding, we design Kareus, a training system that pushes the time--energy tradeoff frontier by optimizing both aspects.
Kareus decomposes the intractable joint optimization problem into local, partition-based subproblems.
It then uses a multi-pass multi-objective optimization algorithm to find execution schedules that push the time--energy tradeoff frontier.
Compared to the state of the art, Kareus reduces training energy by up to 28.3\% at the same training time, or reduces training time by up to 27.5\% at the same energy consumption.\footnote{Kareus is open-source at \url{https://github.com/ml-energy/kareus}.}

  \end{abstract}

  \section{Introduction}

Today, energy is the ultimate bottleneck for scaling AI~\cite{bloombergnef25,cbre2025,inferencemax-blog25}.
The energy demand of training large models and serving them to millions is growing at an unprecedented rate~\cite{patterson2021carbon,llama3-arxiv24,hamilton2024constraint}, with projections indicating that by 2035, nearly 10\% of US electricity demand could be from datacenters~\cite{bloombergnef25}.
However, procuring energy at scale is slow, e.g., three years for natural gas and five to ten years for nuclear~\cite{eia-utility-data}.
This mismatch makes energy an expensive and contended resource that must be explicitly budgeted, managed, and allocated with efficient systems and optimization methods.

In this context, large model training is a key target for optimization; a single training run can consume enough energy to power more than 24,000 average US households for a month~\cite{llama3-arxiv24,us-household}.
To understand the current state of large model training optimizations, we analyze them through the lens of \emph{dynamic} energy (consumed by actual work) and \emph{static} energy (consumed at all times regardless of work) consumption of GPUs (\S\ref{sec:background}).
Practically, dynamic energy can be reduced by lowering GPU frequency or activity, while static energy can be reduced by shortening iteration time.
Perseus~\cite{perseus-sosp24} reduces dynamic energy by scaling the frequency of computations off the critical path, but does not reschedule kernels.
Recent works in fine-grained kernel scheduling~\cite{nanoflow-osdi25,tokenweave-mlsys26,vllm-dbo-docs,domino-arxiv24,deepseek-v3-arxiv24} reduce static energy by overlapping computation and communication, but they ignore dynamic energy and frequency scaling.
Therefore, we ask: can we combine both approaches to \emph{jointly} reduce dynamic and static energy?

To answer this question, we study how \emph{execution schedules} affect the time and energy consumption of large model training (\S\ref{sec:analysis}).
Any large model training iteration boils down to computation and communication kernels launched on the GPU.\@
We define an \emph{execution schedule} as the combination of three factors:
(1) the \emph{timing} of when communication kernels are launched within a sequence of computation kernels,
(2) the \emph{number of GPU Streaming Multiprocessors (SMs)} allocated to communication kernels, and
(3) the \emph{GPU frequency}.

We demonstrate that these factors \emph{jointly} and \emph{interdependently} determine time and energy consumption.
Different execution schedules lead to different time and energy consumption---varying by as much as 3.29$\times$---even when the total amount of work remains the same.
The optimal schedule is achieved by carefully balancing the resource consumption of computation and communication kernels, which depends on the interplay of all three factors.
Crucially, \emph{the best SM allocation and kernel launch timing depend on GPU frequency}; even for the same sequence of kernels, we cannot use the same SM allocation and launch timing at different frequencies.
Intuitively, frequency changes how overlapping kernels contend for resources: lowering frequency slows computation while leaving memory and communication bandwidth largely unchanged, thereby reshaping the optimal SM allocation and kernel launch timing.
Existing solutions for kernel scheduling~\cite{nanoflow-osdi25,tokenweave-mlsys26,vllm-dbo-docs,domino-arxiv24,deepseek-v3-arxiv24} and frequency scaling~\cite{perseus-sosp24} optimize disjoint subsets of these factors, and combining them naively is suboptimal for pushing the time--energy tradeoff frontier.

Based on these observations, we present \emph{Kareus}, an energy-efficient large model training system that automatically finds the best execution schedule by jointly optimizing SM allocation, launch timing, and GPU frequency (\S\ref{sec:design}).
For all kernels in the training iteration, it identifies (1) the right number of SMs for communication kernels, and (2) the right amount of computation--communication overlap, both in a frequency-specific manner.
A collateral benefit is that executions on the critical path also speeds up due to improved overlap, reducing training time and thus shifting the entire time--energy tradeoff frontier toward the origin.

Unfortunately, jointly optimizing all three factors via exhaustive search is impractical.
The search space is prohibitively large to profile each configuration.
Worse, profiling heats up the GPU, and without sufficient cooling intervals, one configuration's measurement affects subsequent ones.

Kareus addresses these challenges by introducing the \emph{partitioned overlap} execution model.
The execution graphs of forward and backward passes are divided into fixed partitions, decomposing the global optimization problem into local subproblems.
Partitioned overlap generalizes recent \emph{nanobatching} techniques~\cite{nanoflow-osdi25,tokenweave-mlsys26,vllm-dbo-docs,domino-arxiv24,deepseek-v3-arxiv24} and adds fine-grained control over all three execution schedule factors, enabling precise control over the overlap between each communication kernel and its surrounding computation.
For each partition, Kareus uses a multi-pass multi-objective optimization algorithm to identify candidates on the time--energy tradeoff frontier.
It then hierarchically composes local (partition-level) frontiers into a global (iteration-level) one for the entire training iteration.

We implement Kareus by integrating with Perseus~\cite{perseus-sosp24} and Megatron-LM~\cite{megatronlm-sc21} (\S\ref{sec:implementation}).
Kareus's optimizer is informed by time and energy profiling results from our \emph{thermally stable} profiler, which minimizes thermal interference between different configurations during profiling.
Once the best execution schedule is identified, Kareus interacts with Perseus and Megatron-LM to realize the plan throughout the entire training execution.

We evaluate Kareus across 14 representative workloads, including real testbed training on Llama 3.2 3B and Qwen 3 1.7B, and large-scale emulation on Llama 3.3 70B (\S\ref{sec:evaluation}).
Compared to the state-of-the-art Perseus, Kareus reduces energy by up to 28.3\% under the same time budget, or reduces time by up to 27.5\% under the same energy budget.

In summary, we make the following contributions:
\begin{denseitemize}
  \item We show that SM allocation, launch timing, and GPU frequency jointly determine time and energy consumption, and that existing solutions optimizing subsets of these factors and naive combinations thereof are suboptimal.

  \item We design Kareus around the \emph{partitioned overlap} execution model, decomposing the global optimization problem into tractable local subproblems.

  \item We evaluate Kareus on large model training workloads, demonstrating significant time and energy reduction compared to the state-of-the-art.
\end{denseitemize}

  \section{Background}\label{sec:background}

We begin by providing a brief background on GPU power consumption (\S\ref{sec:background-power}).
Then, we compare existing training systems in terms of their goals and techniques (\S\ref{sec:background-comparison}), and discuss implications for training iteration time and energy (\S\ref{sec:background-breakdown}).

\subsection{GPU Power Consumption}\label{sec:background-power}

A GPU's power consumption can be divided into two components: \emph{dynamic power and static power}~\cite{gpuwattch,flipflop-icse26,perseus-sosp24}.
Dynamic power is the power consumed by the chip's \emph{compute and memory activity}; it is proportional to core frequency and the square of voltage.
In contrast, static power is consumed by \emph{all parts of the chip} at all times regardless of utilization or activity.

\vspace{0.2em}
\subsection{Existing Training Systems}\label{sec:background-comparison}

\begin{figure}[t]
  \begin{center}
    \subfloat[1F1B pipeline with no frequency scaling]{
      \includegraphics[width=0.85\columnwidth]{
        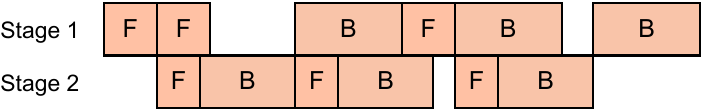
      }\label{fig:background-system-megatron}
    }

    \subfloat[1F1B pipeline with frequency scaling by Perseus]{
      \includegraphics[width=0.85\columnwidth]{
        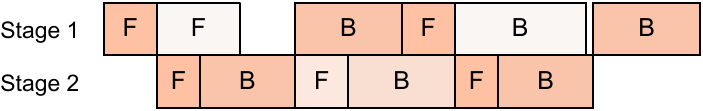
      }\label{fig:background-system-perseus}
    }
  \end{center}
  \vspace{-1.0em}
  \caption{
    Existing training systems running the 1F1B pipeline schedule~\cite{megatronlm-sc21}.
    Redder colors indicate higher GPU frequency and power draw.
    (a) Megatron-LM~\cite{megatronlm-sc21} does not perform any frequency scaling, whereas (b) Perseus~\cite{perseus-sosp24} scales GPU frequencies of non-critical forward and backward computations to reduce energy consumption while maintaining the same latency.
  }
\end{figure}

\begin{figure}[t]
  \begin{center}
    \subfloat[Sequential kernel execution]{
      \includegraphics[width=0.90\columnwidth,clip,trim=0 65 0 0]{
        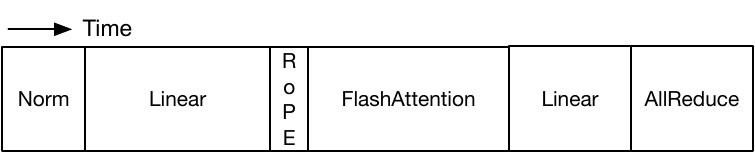
      }\label{fig:background-execution-sequential}
    }

    \subfloat[Nanobatching]{
      \includegraphics[width=0.90\columnwidth,clip,trim=0 0 0 80]{
        figures/background/execution_combined.pdf
      }\label{fig:background-execution-nanobatching}
    }
  \end{center}
  \vspace{-1.0em}
  \caption{
    Transformer~\cite{transformer-neurips17} Attention layer with tensor parallelism run with different execution models.
    (a) The sequential kernel execution model only runs one kernel at a time strictly following data dependencies.
    (b) Nanobatching~\cite{domino-arxiv24,deepseek-v3-arxiv24,nanoflow-osdi25,tokenweave-mlsys26,vllm-dbo-docs} splits a pipeline microbatch into two nanobatches (illustrated with different colors) and staggers their execution, creating opportunities to overlap communication and computation.
  }
\end{figure}

Large model training relies on multi-GPU parallelism, which distributes model weights and computation while introducing communication.
On each GPU, kernel scheduling determines how computation and communication are executed.
Figure~\ref{fig:background-system-megatron} illustrates Megatron-LM's~\cite{megatronlm-sc21} implementation of the 1F1B pipeline schedule.
Each pipeline microbatch (boxes labeled F and B) contains multiple Transformer~\cite{transformer-neurips17} blocks, each with an Attention layer and a feed-forward (MLP) layer.
Figure~\ref{fig:background-execution-sequential} shows the sequential kernel execution model adopted by Megatron-LM~\cite{megatronlm-sc21}, where kernels execute one after another following data dependencies.

Beyond parallelizing work across devices~\cite{megatronlm-sc21,megascale-nsdi24,alpa-osdi22,llama3-isca25,partir-asplos25,cornstarch-arxiv25}, recent works have optimized latency with \emph{nanobatching}~\cite{domino-arxiv24,deepseek-v3-arxiv24,nanoflow-osdi25,tokenweave-mlsys26,vllm-dbo-docs}, where a single pipeline microbatch is split into two equal-sized \emph{nanobatches} with no data dependencies between them.
This creates opportunities to overlap communication and computation kernels from different nanobatches,\footnote{Computation and communication exert different GPU resources, so overlapping them increases overall GPU utilization and can lead to speedup.} as shown in Figure~\ref{fig:background-execution-nanobatching}.

A separate line of work has optimized the energy consumption of large model training by scaling GPU frequencies~\cite{zeus-nsdi23,envpipe-atc23,perseus-sosp24}.
Notably, Perseus~\cite{perseus-sosp24} dynamically reduces the GPU frequencies of microbatches off the critical path of computation, as shown in Figure~\ref{fig:background-system-perseus}, to reduce energy consumption while keeping overall training iteration time the same.
However, it follows the sequential kernel execution model (Figure~\ref{fig:background-execution-sequential}), missing opportunities from fine-grained kernel scheduling like nanobatching.

\subsection{Breaking Down Training Energy Consumption}\label{sec:background-breakdown}

\begin{table}[t]
  \footnotesize
  \centering
  \caption{
    Training iteration time (seconds) and energy breakdown (Joules) of Megatron-LM, Nanobatching, and each combined with Perseus, training Qwen 3 1.7B on 16 NVIDIA A100 GPUs.
    Megatron-LM achieves 99.0 TFLOP/s/GPU.
  }\label{tab:background-system-comparison}
  \vspace{-0.5em}
  \begin{tabular}{lrrrr}
    \toprule
      & \begin{tabular}{@{}r@{}}Iteration\\time\end{tabular}
      & \begin{tabular}{@{}r@{}}Static\\energy\end{tabular}
      & \begin{tabular}{@{}r@{}}Dynamic\\energy\end{tabular}
      & \begin{tabular}{@{}r@{}}Total\\energy\end{tabular} \\
    \midrule
    Megatron-LM            & 5.60 & 5,372 & 21,374 & 26,745 \\
    Megatron-LM + Perseus  & 5.60 & 5,374 & 19,531 & 24,905 \\
    Nanobatching           & 5.31 & 5,096 & 21,445 & 26,541 \\
    Nanobatching + Perseus & 5.37 & 5,160 & 19,729 & 24,889 \\
    \bottomrule
  \end{tabular}
\end{table}

We can better understand the energy consumption of existing training systems and techniques by breaking down energy into static and dynamic components.
Table~\ref{tab:background-system-comparison} presents such a breakdown along with iteration time for Qwen 3 1.7B~\cite{qwen3-arxiv25} training on 16 NVIDIA A100 GPUs.\footnote{Trained with 8 microbatches, each with size 16 and sequence length 4K, using pipeline parallelism 2, context parallelism 2, and tensor parallelism 4. The same experiment also appears in Table~\ref{tab:e2e-leftmost}, Table~\ref{tab:e2e-iso}, and Figure~\ref{fig:eval-frontier-real}.}
Static energy is static power\footnote{Static power is set to be the GPU's power draw when it is in \emph{ready} state (power state P0) without running any significant computations.} multiplied by iteration time, and dynamic energy is static energy subtracted from total energy.

Compared to baseline Megatron-LM (first row), Nanobatching (third row) reduces iteration time, which directly lowers static energy.
Dynamic energy is slightly higher because nanobatching incurs additional GPU activity from more memory accesses and extra gradient accumulations per nanobatch.
Perseus applied to Megatron-LM (second row) reduces dynamic energy through explicit frequency scaling while keeping iteration time nearly the same, so static energy remains unchanged.
Applying Perseus on top of Nanobatching (fourth row) combines both benefits: reduced static energy from shorter time and dynamic energy from frequency scaling.

However, we find that there is further opportunity to reduce energy beyond what is achieved by simply combining Nanobatching and Perseus.
To understand this gap, we analyze how communication and kernel scheduling affect energy consumption (\S\ref{sec:analysis}).
The opportunities we find motivate the design of Kareus, which jointly optimizes kernel scheduling and frequency scaling (\S\ref{sec:design}).

  \section{Energy Impact of Execution Schedules}\label{sec:analysis}

We investigate the relationship between energy consumption and execution schedule.
We first discuss how different execution schedules for the same \emph{work} can lead to different energy consumption (\S\ref{sec:analysis-background}), and then analyze the impact of key factors through case studies (\S\ref{sec:analysis-case-study}).
This reveals the necessity of joint control and the opportunity for significant energy savings (\S\ref{sec:analysis-joint-control}).

\vspace{0.25em}
\subsection{Execution Schedule and Energy Consumption}\label{sec:analysis-background}

Regardless of \emph{how} work (computation, memory access, and communication) is performed, the total amount of work done is the same.
However, different execution schedules can still lead to different energy consumption.

The distinction between dynamic and static power is key to understanding this.
Dynamic energy reflects the total work done on the hardware (i.e., transistors switching from computation, memory access, and communication).
At \emph{the same GPU frequency}, it remains largely constant across execution schedules.
In contrast, all parts of the hardware consume static power as long as they are powered on.
Compute components (e.g., SMs) account for a significant portion of chip area and thus static power.
When GPU resources are underutilized, static power is still dissipated without as much useful work being done, increasing total energy consumption.

In sum, GPU frequency primarily influences dynamic energy, whereas execution schedules largely determine static energy by changing end-to-end execution time.
However, these factors are interdependent and cannot be decoupled and optimized separately, as we show next.

\vspace{0.25em}
\subsection{Interdependent Factors in Execution Schedules}\label{sec:analysis-case-study}

Execution schedules differ in which kernels run concurrently on the GPU, how they split GPU resources, and at what GPU frequency they run.
We study the impact of execution schedules on time and energy consumption by varying three key factors that correspond to each: (1) communication kernel launch timing, (2) number of SMs allocated to the communication kernel, and (3) GPU frequency.
This definition of execution schedule generalizes nanobatching.
The original nanobatching model (1) launches communication kernels as soon as possible, (2) uses default communication kernels like NCCL optimized for sequential execution (which may use excessive SMs assuming no concurrent kernels), and (3) does not perform GPU frequency scaling.

\begin{figure*}[t!]
  \begin{center}
    \includegraphics[width=0.35\textwidth]{
      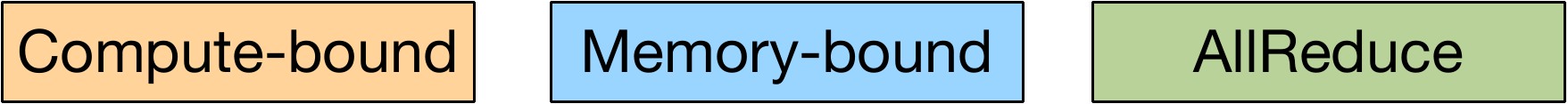
    }
  \end{center}
  \vspace{-2.3em}

  \begin{center}
    \begin{minipage}[t]{0.48\textwidth}
      \centering
      \subfloat[2 SMs lead to exposed communication time]{
        \includegraphics[width=0.95\columnwidth]{
          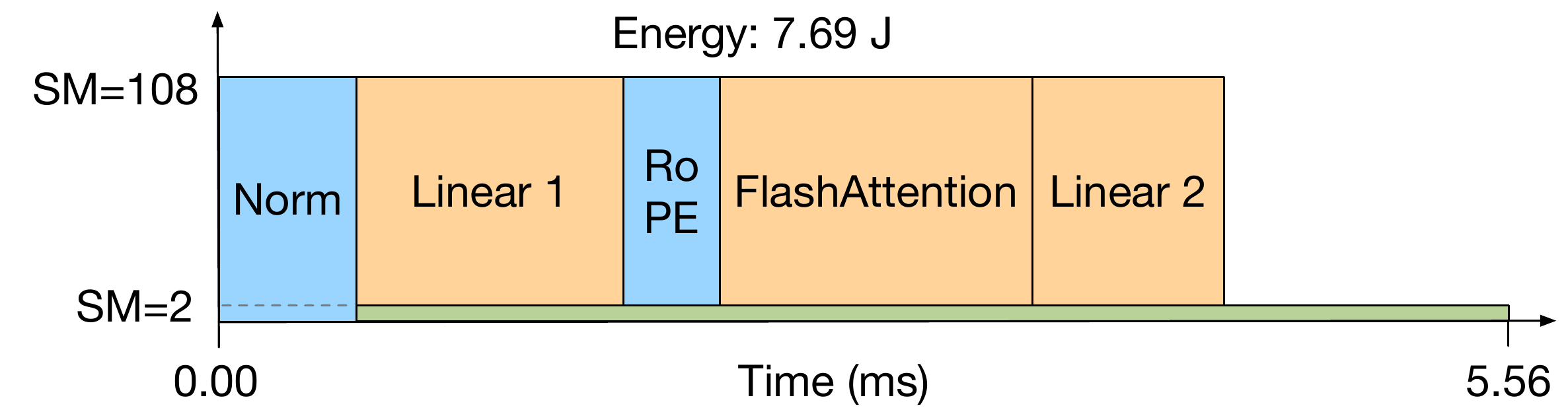
        }\label{fig:analysis-vary-sm-2}
      }
      \vspace{-0.3em}

      \subfloat[4 SMs eliminate exposed communication time]{
        \includegraphics[width=0.95\columnwidth]{
          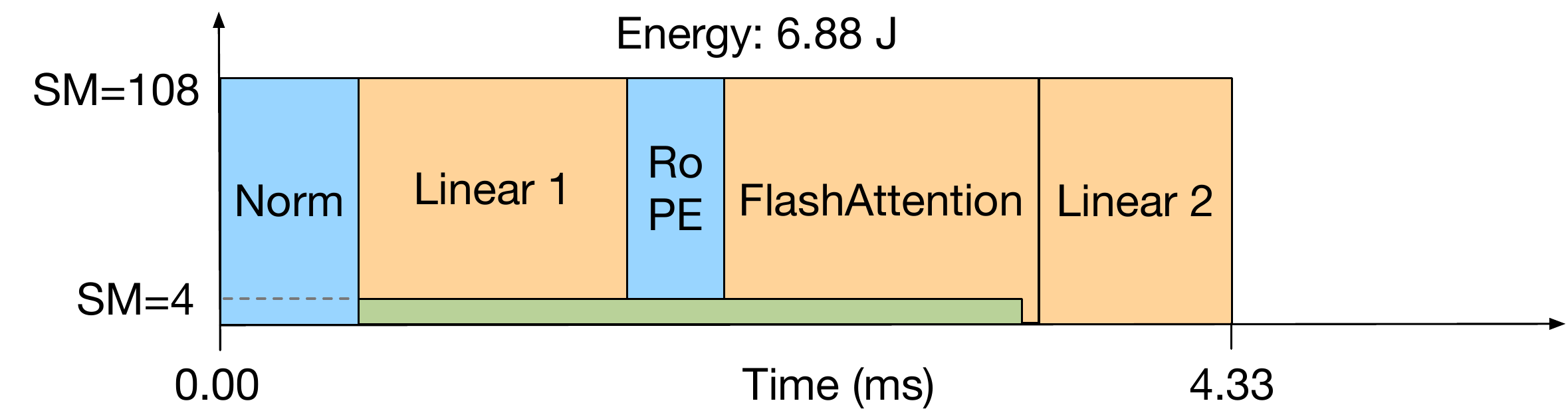
        }\label{fig:analysis-vary-sm-4}
      }
      \vspace{-0.3em}

      \subfloat[20 SMs reduce communication latency but lead to compute interference]{
        \includegraphics[width=0.95\columnwidth]{
          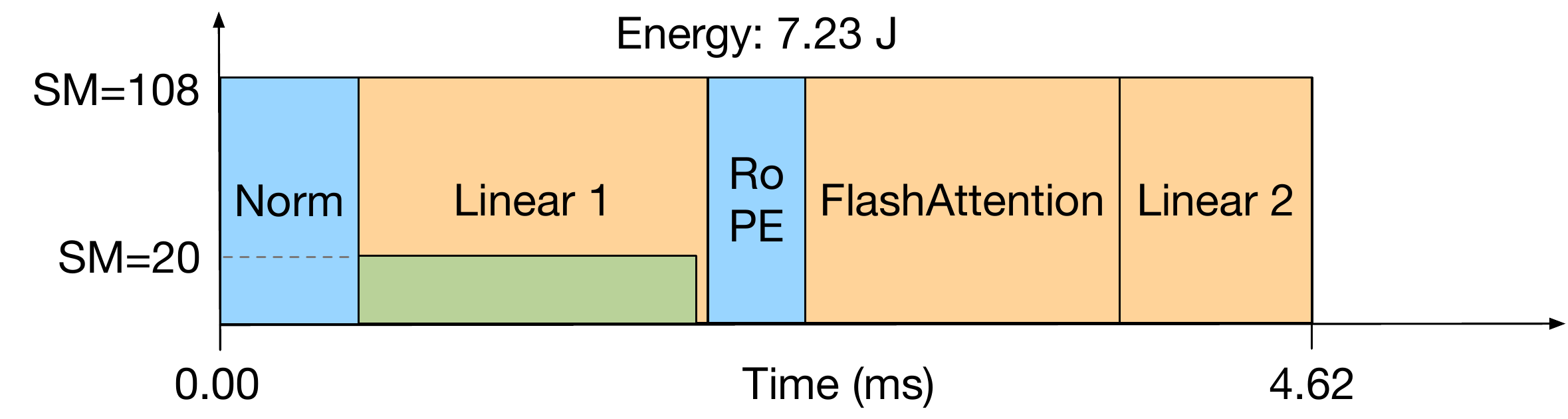
        }\label{fig:analysis-vary-sm-20}
      }
    \end{minipage}
    \hfill
    \begin{minipage}[t]{0.48\textwidth}
      \subfloat[Same as (b) but communication was launched with \texttt{Norm}]{
        \includegraphics[width=0.95\columnwidth]{
          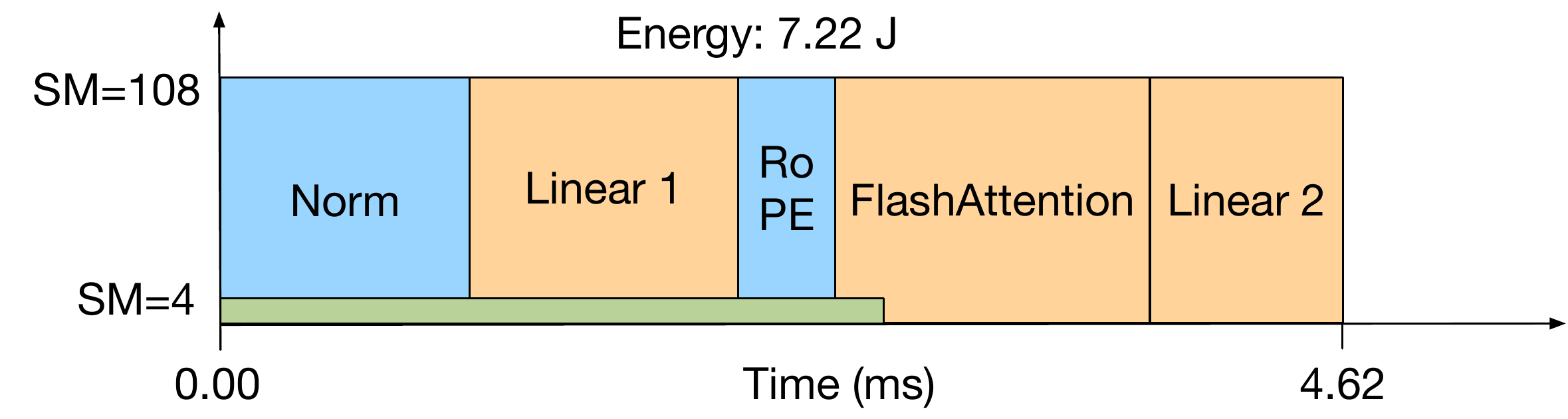
        }\label{fig:analysis-start-norm-1410}
      }
      \vspace{-0.3em}

      \subfloat[Same as (d) but runs at a lower GPU frequency (1,100 MHz)]{
        \includegraphics[width=0.95\columnwidth]{
          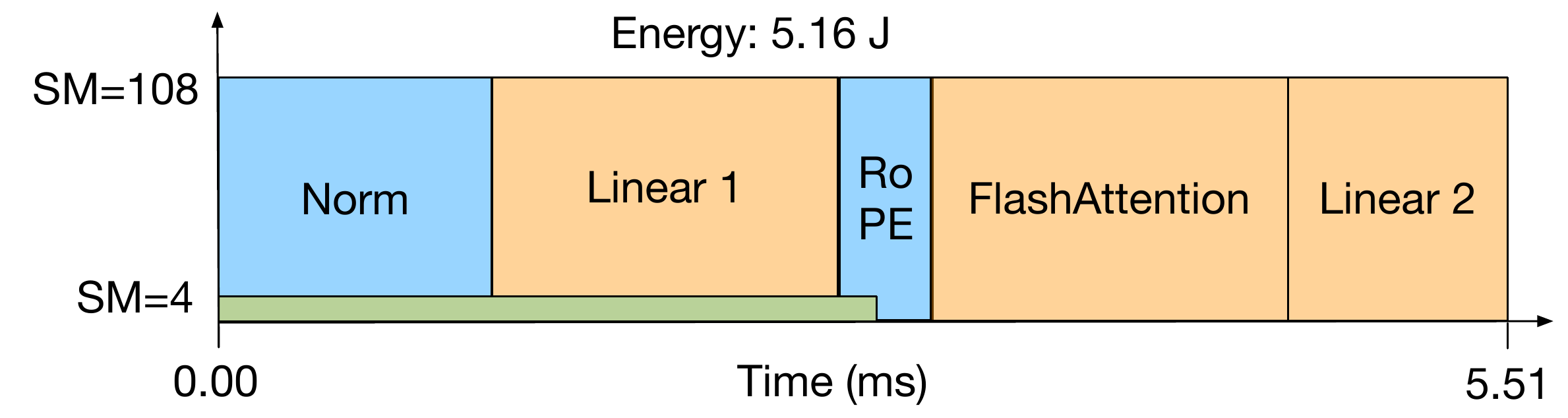
        }\label{fig:analysis-start-norm-1100}
      }
      \vspace{-0.3em}

      \subfloat[Lower frequency (1,100 MHz) changes the energy-optimal schedule]{
        \includegraphics[width=0.95\columnwidth]{
          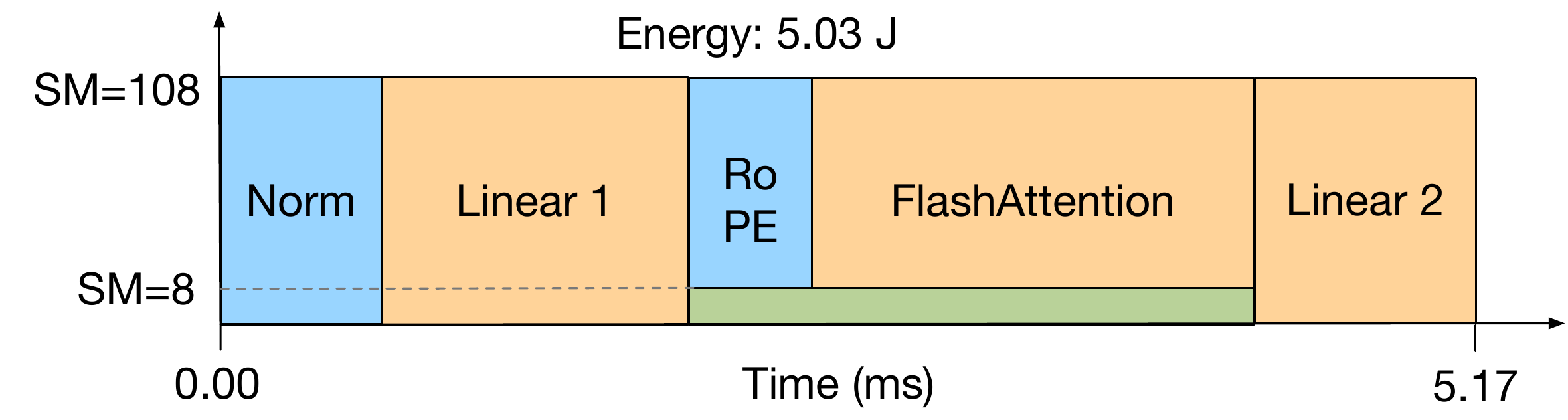
        }\label{fig:analysis-start-rmsnorm-1100}
      }
    \end{minipage}
  \end{center}
  \vspace{-1em}
  \caption{
    The time and total energy consumption of execution schedules for one Transformer Attention layer forward pass with varying SM allocation, communication launch timing, and GPU frequency.
    (a)--(c) show the effect of allocating different numbers of SMs to the communication kernel at 1,410 MHz, with (b) being the energy-optimal schedule.
    (d) is the same as (b), except that communication was launched earlier together with \texttt{Norm}.
    Finally, (e) and (f) run at a lower GPU frequency of 1,100 MHz.
    (e) is the same as (d) other than frequency, whereas (f) is the energy-optimal schedule at 1,100 MHz, which is different from the schedule in (b).
  }\label{fig:analysis-vary-schedules}
  \vspace{-1em}
\end{figure*}

\begin{table*}[th]
  \footnotesize
  \centering
  \caption{
    Summary of factors and observations from case studies on the energy impact of execution schedules.
  }\label{tab:analysis-observations}
  \vspace{-0.5em}
  \begin{tabular}{p{0.12\columnwidth}p{0.38\columnwidth}p{1.38\columnwidth}}
    \toprule
    \textbf{Factor} & \textbf{Observation} & \textbf{Explanation} \\
    \midrule
    SM\newline allocation
      & A middle-ground sweet spot exists
      & Too few SMs for a kernel can lead to periods where only one small kernel is running, wasting static power; too many SMs slow down other kernels without significantly speeding up the target kernel. \\
    \addlinespace
    Launch\newline timing
      & Resource demands of kernels running together matter
      & Communication, which can be memory-bound, running together with memory-bound operations (e.g., \texttt{Norm}, \texttt{RoPE}) causes bandwidth contention and slowdown for everyone. \\
    \addlinespace
    GPU\newline frequency
      & Changes relative kernel resource demands
      & Lower frequency makes kernels relatively more compute-bound, changing which kernels benefit from overlapping; it also amplifies static power's relative impact, making exposed communication more harmful. \\
    \addlinespace
    All three
      & Time \& energy impacts of all factors are interdependent
      & The energy-optimal SM allocation and launch timing depend on the GPU frequency; changing one factor shifts the energy-optimal configurations of the others. \\
    \bottomrule
  \end{tabular}
  \vspace{-1em}
\end{table*}

Figure~\ref{fig:analysis-vary-schedules} shows the timelines and total energy consumption of different execution schedules for a single Transformer Attention layer of Llama 3.2 3B~\cite{llama3-arxiv24}, with tensor parallelism degree 4 on four fully-connected NVIDIA A100 GPUs.
For simplicity of exposition, we focus on a single repeating segment: the computation kernels of one nanobatch and the communication kernel of the \emph{previous} nanobatch.
Thus, the communication kernel has no data dependencies with any of the computation kernels in the same segment.
This is the common-case execution pattern throughout training.

\subsubsection{SM Allocation}

First, we study the impact of GPU resource sharing via SM allocation.
Allocating more SMs to communication kernels leaves fewer SMs for overlapped computation kernels given a fixed number of SMs.
Figures~\ref{fig:analysis-vary-sm-2},~\ref{fig:analysis-vary-sm-4}, and~\ref{fig:analysis-vary-sm-20} show execution schedules that vary only in the number of SMs allocated to the communication kernel (2, 4, and 20 SMs, respectively) while launching the communication kernel together with \texttt{Linear 1}.
With two SMs allocated to the communication kernel (Figure~\ref{fig:analysis-vary-sm-2}), the communication does not complete before the computations finish, leaving an exposed communication period during which 106 SMs are idle.
Static power is wasted during this exposed communication time, increasing total energy consumption.
Increasing the number of SMs to four (Figure~\ref{fig:analysis-vary-sm-4}) lets the communication kernel complete before the computation kernels, minimizing static power wastage and total energy consumption.
However, further increasing the number of SMs to 20 (Figure~\ref{fig:analysis-vary-sm-20}) speeds up communication only at the cost of taking SMs away from \texttt{Linear 1}, which ultimately increases total time and energy consumption.
This is because the excess SMs allocated to the communication kernel slow down computation while remaining nearly idle themselves, again wasting static power.

\subsubsection{Communication Launch Timing}

Next, we study the impact of kernels that run together by changing when to launch the communication kernel.
Figure~\ref{fig:analysis-start-norm-1410} shows an execution schedule where the communication kernel runs with four SMs (the same as Figure~\ref{fig:analysis-vary-sm-4}) but starts with \texttt{Norm} instead of \texttt{Linear 1}.
As shown in colors, \texttt{Norm} and \texttt{RoPE} kernels are memory-bound, and with sufficient SMs, the communication kernel also requires high memory bandwidth to write the communicated data to memory.
When such a communication kernel starts together with another memory-bound kernel like \texttt{Norm}, the two compete for memory bandwidth, prolonging both.
This extends the period where compute resources are underutilized, wasting static power.
Furthermore, compared to Figure~\ref{fig:analysis-vary-sm-4}, memory bandwidth is underutilized during \texttt{FlashAttention}, wasting static power in memory components.

\subsubsection{Varying GPU Frequency}

Finally, we study the impact of changing the GPU frequency.
Reducing GPU frequency lowers dynamic energy~\cite{zeus-nsdi23,perseus-sosp24}.
More importantly, the energy-optimal execution schedule changes at lower frequencies.
At lower frequencies, all kernels become relatively more compute-bound because reducing frequency only affects computation throughput and not memory throughput~\cite{nvidia-dvfs-blog}.\footnote{Essentially, the hardware roofline model's horizontal compute-bound ceiling is lowered, making it easier for kernels to be compute-bound.}
Thus, running the communication kernel with a memory-bound kernel causes less interference since memory bandwidth is less contended.
Conversely, running it with a compute-bound kernel and taking SMs away from it causes more severe interference, with larger slowdowns and more static power wastage.

Figure~\ref{fig:analysis-start-norm-1100} is the same as Figure~\ref{fig:analysis-start-norm-1410}, but runs at a lower frequency (1,100 MHz).
Reduced dynamic energy contributes substantially to lower total energy.
However, execution time increases significantly because the communication kernel takes SMs away from \texttt{Linear 1}, which is now even more compute-bound at the lower frequency.
The energy-optimal schedule at this frequency, shown in Figure~\ref{fig:analysis-start-rmsnorm-1100}, launches communication together with \texttt{RoPE}.
This lets the compute-bound \texttt{Linear} kernels run without interference, reducing total execution time and static power wastage.
Figure~\ref{fig:analysis-frontier} plots the time--energy frontier of the six execution schedules.

We also note that exposed communication time is relatively more harmful at lower frequencies.
Dynamic power decreases significantly with frequency while static power does not---this increases static power's proportion in total power draw and thus its relative impact when wasted.

\subsection{Joint Control and Opportunities}\label{sec:analysis-joint-control}

\begin{figure}[t!]
  \begin{center}
    \includegraphics[width=0.3\textwidth]{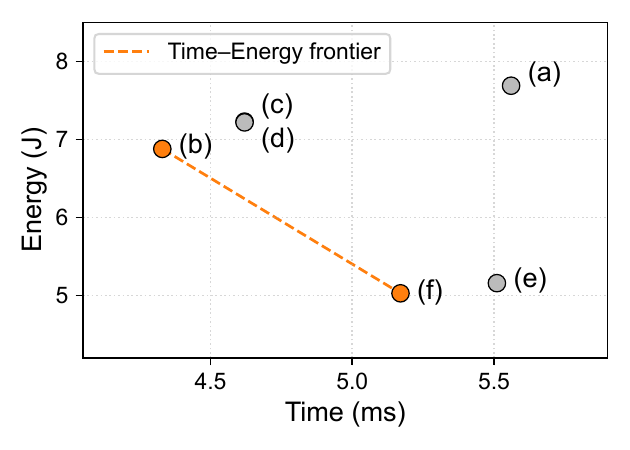}
    \vspace{-0.5em}
    \caption{
      Time and energy consumption of the execution schedules in Figure~\ref{fig:analysis-vary-schedules} (a)--(f).
    }\label{fig:analysis-frontier}
    \vspace{-0.5em}
  \end{center}
\end{figure}

Table~\ref{tab:analysis-observations} summarizes observations from the case studies.
The three factors that determine the execution schedule---SM allocation, communication launch timing, and GPU frequency---are
(1) interdependent: changing one shifts the energy-optimal configuration of the others, and
(2) highly impactful, with up to a 3.29$\times$ gap in time and energy consumption across the observed schedules.
Therefore, optimizing time and energy requires \emph{joint control} of all three.
This is the key insight motivating the design of Kareus's optimization algorithm.

In fact, joint control of GPU frequency and kernel scheduling is provably more energy-efficient than leaving GPU frequency control to the hardware's power controller (Appendix~\ref{apdx:frequency-theorem}).
The intuition is that since dynamic power grows roughly with the cube of frequency,\footnote{Dynamic power is proportional to $V^2 f$. In NVIDIA GPUs, voltage scales roughly linearly with frequency (up to a certain point), so $V^2 f$ becomes $f^3$.} high-frequency periods cost more energy than what low-frequency periods save, so removing frequency variation reduces total dynamic energy.

Our analysis also shows that opportunities for energy savings with execution schedule control are significant.
For simplicity of analysis, we have studied a single Attention layer with modest communication volume (tensor parallelism degree 4).
Even in this setting, without any GPU frequency scaling, the energy-optimal execution schedule (Figure~\ref{fig:analysis-vary-sm-4}) reduces energy consumption by 24.3\% and time by 12.3\% compared to Megatron-LM, and 7.1\% energy and 5.4\% time compared to Nanobatching.
Larger models, more GPUs, and additional parallelism dimensions (e.g., context parallelism) provide even greater opportunities, as we show in Section~\ref{sec:evaluation}.

  \section{Algorithm Design}\label{sec:design}

In this section, we present the optimization algorithm of Kareus, whose goal is to find the time--energy tradeoff frontier of executing computation and communication kernels of large model training.
We begin by formulating the optimization problem (\S\ref{sec:design-formulation}), and then introduce the partitioned overlap execution model, which decomposes the problem into tractable subproblems (\S\ref{sec:design-partition}). 
Building on this model, Kareus efficiently explores the execution schedule space to derive the time--energy frontier for each subproblem (\S\ref{sec:design-bo}), and composes these local frontiers into a global frontier that characterizes the training iteration (\S\ref{sec:design-global}).
Finally, we discuss how Kareus generalizes to broader computation and communication patterns (\S\ref{sec:design-extensions}).

\subsection{Optimization Formulation}\label{sec:design-formulation}

\paragraph{Objective.}
The goal of Kareus is to find an efficient time--energy frontier for the training iteration.
Finding the whole frontier, instead of just the minimum-time point, is valuable because it allows users to select tradeoff points that meet job-level requirements (e.g., time deadlines, energy budgets) or perform dynamic adaptation to changing environments (e.g., stragglers)~\cite{perseus-sosp24,zeus-nsdi23}.
Since the iteration frontier can be constructed by combining \emph{microbatch} (forward and backward) frontiers (\S\ref{sec:design-global}), Kareus's objective reduces to finding microbatch time--energy frontiers that Pareto-dominate those of prior works.

\paragraph{Decision variables.}
We consider three decision variables corresponding to the three key factors that influence time and energy (\S\ref{sec:analysis}):
(1) the number of SMs allocated to communication kernels;
(2) kernel launch timing, which determines the execution order and overlap between kernels while respecting data dependencies; and
(3) GPU frequency.

\paragraph{Solution space.}
Unfortunately, the combined solution space of these three factors is extremely large.
For a typical Transformer-based LLM on an A100 GPU, this space comprises 85K candidate configurations (details in Appendix~\ref{apdx:launch-timing-dp}).
Each candidate in this large search space requires lengthy profiling for accurate energy measurements---on average, 13 seconds per candidate---for thermal stability (\S\ref{sec:implementation}).
An exhaustive search over the entire solution space can take up to 4,912 GPU-hours!

\subsection{Partitioned Overlap Execution Model}\label{sec:design-partition}

\begin{figure}[t!]
  \begin{center}
    \includegraphics[width=0.47\textwidth]{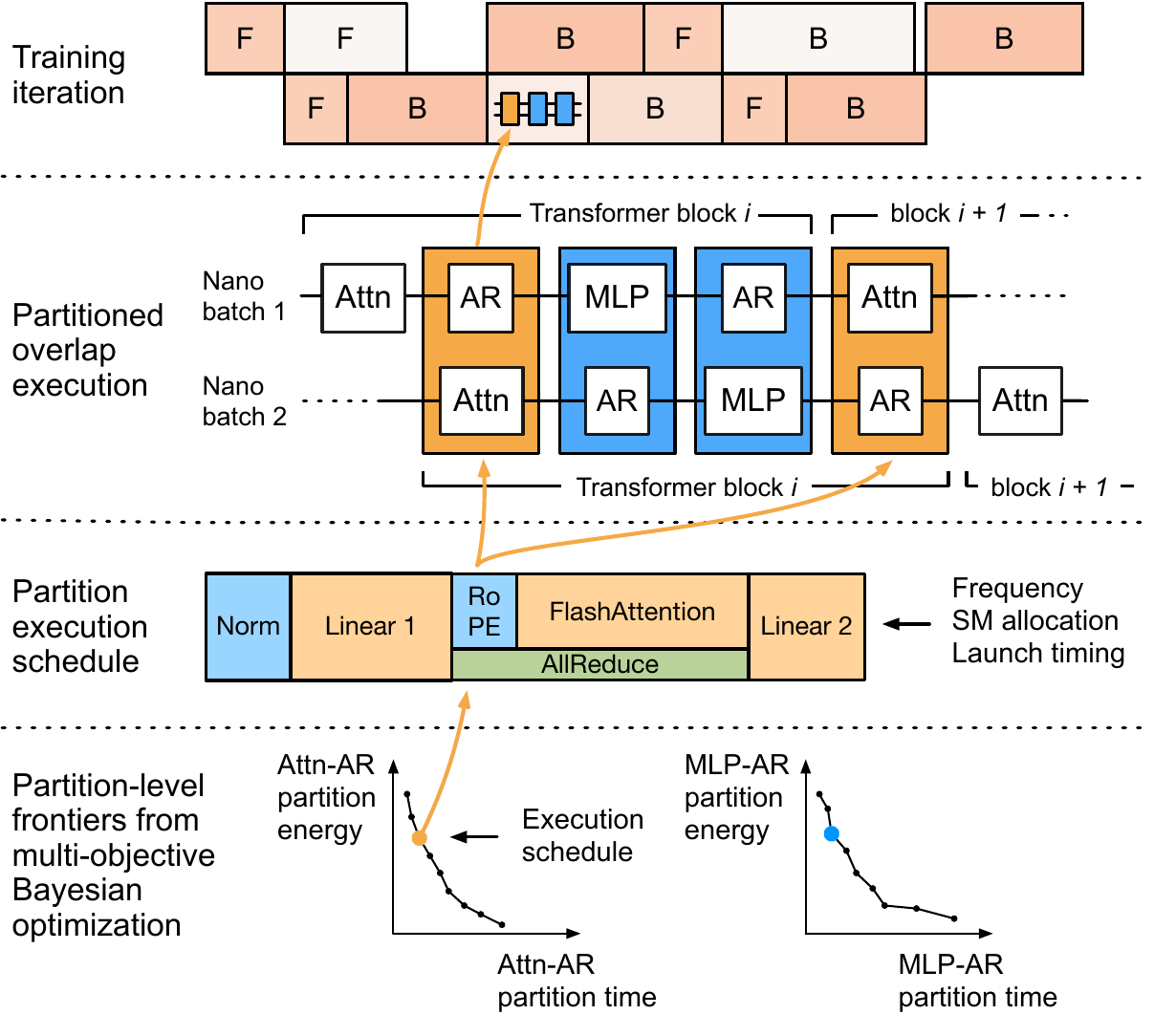}
    \vspace{-0.5em}
    \caption{
      Kareus execution and optimization overview.
      The first row shows one training iteration with the 1F1B pipeline schedule.
      Inside each microbatch, partitions are sequentially executed; the second row illustrates this with the Transformer forward pass with tensor parallelism.
      The third row shows the execution schedule of the Attention--AllReduce partition.
      Each partition's execution schedule is chosen from the partition-level time--energy frontiers (fourth row) characterized by Kareus's optimization algorithm.
    }\label{fig:design-optimization-overview}
    \vspace{-0.5em}
  \end{center}
\end{figure}

To shrink the search space, we identify \emph{repeating patterns} in computation and communication sequences, optimize them independently, and compose them into a global schedule.

\paragraph{Partition.}

Kareus groups kernels executing in repeating patterns into \emph{partitions}.
A partition consists of one communication kernel from one nanobatch and the longest contiguous sequence of computation kernels from the other nanobatch.
There are no data dependencies between the communication kernel and the computation kernels in the same partition.
This allows the communication kernel to overlap with any contiguous subsequence of computation kernels in the partition.

The second row of Figure~\ref{fig:design-optimization-overview} illustrates the repeating pattern of partitions.
The two orange boxes (Attention--AllReduce partition) are identical, as are the two blue boxes (MLP--AllReduce partition), and both partition types repeat across all Transformer blocks.

\paragraph{Optimization overview.}
After automatically detecting the partitions, Kareus's optimizer first characterizes the time--energy frontier of each partition (\S\ref{sec:design-bo}), as shown in the fourth row of Figure~\ref{fig:design-optimization-overview}.
Each point on the frontier represents an efficient execution schedule for that partition, like the one depicted in the third row of Figure~\ref{fig:design-optimization-overview}.
The time--energy frontiers of all partitions are then combined to form the time--energy frontier of each forward and backward microbatch, and then combined again to form the global, iteration-level time--energy frontier (\S\ref{sec:design-global}).

\subsection{Multi-Objective Bayesian Optimization}\label{sec:design-bo}

\subsubsection{Bayesian Optimization Primer}
For one partition, our optimization problem still involves a complex search space with mixed discrete and categorical variables.
Importantly, the time and energy of each configuration have no known closed-form expression and can only be obtained through profiling.
Bayesian Optimization (BO) is well-suited to this setting;
it optimizes objectives over complex search spaces where the objective is unknown in form and costly to evaluate~\cite{bo-tutorial-arxiv18}.

A BO algorithm has two key components.
First, a \emph{surrogate model} is trained on evaluated candidates to approximate the objective function.
It is initialized with random candidates and updated as new candidates are evaluated.
Second, an \emph{acquisition function} uses the surrogate model to rank candidates and select the next one(s) to evaluate.
BO then proceeds iteratively: it selects a batch of candidates using the acquisition function, evaluates them, updates the surrogate model, and repeats until convergence.

\subsubsection{Kareus's MBO Algorithm}
Although BO has been widely applied to systems problems~\cite{cherrypick-nsdi17,autotvm-neurips18,vulcan-nsdi24,baco-asplos23}, it does not fit our setting as is.
Standard BO targets a single objective, whereas we seek the Pareto frontier of time and energy.
We therefore design a Multi-objective Bayesian Optimization (MBO) algorithm with two tailored components: separate surrogate models for time and energy prediction, and a multi-pass candidate selection procedure that explores the frontier from multiple directions.

\paragraph{Surrogate model.}
We train two surrogate models: $\widehat{T}(x)$ for time and $\widehat{E}(x)$ for dynamic energy,\footnote{We use the hat notation to denote predictions, not true measurements.} where $x$ is a candidate configuration.
Since time is primarily influenced by SM allocation and launch timing and dynamic energy by GPU frequency, the two models are largely orthogonal.
Total energy consumption can then be computed as $\widehat{T}(x) \cdot P_{\text{static}} + \widehat{E}(x)$, where $P_{\text{static}}$ is the GPU's static power consumption.

We adopt gradient-boosted decision trees (XGBoost)~\cite{xgboost-kdd16} as the surrogate model for two reasons.
First, XGBoost training scales linearly with data points (versus cubic for Gaussian Processes), allowing fast retraining.
Second, its tree-based structure handles discrete (GPU frequencies and SM allocations) and categorical (launch timing) parameters well.

\paragraph{Multi-pass candidate selection.}
Since our MBO aims to find solutions on the time--energy Pareto frontier rather than a single optimum, we design a \emph{multi-pass} candidate selection procedure.
At each iteration of MBO, we select a batch of $k$ candidates to evaluate next.
The batch consists of candidates selected based on multiple different acquisition functions that are designed to 
(1) cover complementary regions of the time--energy tradeoff space and 
(2) perform both exploration and exploitation to avoid over- or under-exploring certain regions.

\paragraph{Exploitation with hypervolume improvement.}
{\setlength{\columnsep}{14pt}
\begin{wrapfigure}{r}{0.15\textwidth}
  \vspace{-1.2em}
  \begin{center}
    \includegraphics[width=0.17\textwidth]{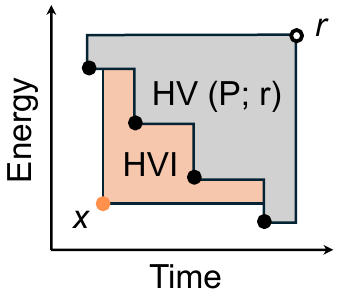}
    \vspace{-1.5em}
    \caption{
      HVI.
    }\label{fig:design-hv}
  \end{center}
\end{wrapfigure}
The acquisition functions for exploitation are based on \emph{hypervolume improvement} (HVI), which quantifies how much a candidate can expand the current time--energy frontier toward lower time and energy (Figure~\ref{fig:design-hv}).
It is defined as
\begin{equation*}
  \mathrm{HVI}(x) = \mathrm{HV}\left(\mathcal{P} \cup \left\{\left(\widehat{T}{(x)}, \mathrm{Energy}(x)\right)\right\}; r\right) - \mathrm{HV}(\mathcal{P}; r)
\end{equation*}
where $\mathcal{P}$ is the current set of candidates on the time--energy frontier, and $r$ is a reference point set slightly worse (outward) than the worst observed points so that the hypervolume (HV) is computable.
By switching the definition of $\mathrm{Energy}(x)$ to total energy, dynamic energy, or static energy, each derived or taken as is from the surrogate models $\widehat{T}(x)$ and $\widehat{E}(x)$, we obtain three HVI acquisition functions.
Each function guides frontier expansion in a distinct direction: dynamic energy favors lower-frequency candidates, static energy favors faster candidates, and total energy lies in the middle.
}

\paragraph{Exploration with uncertainty.}
To reduce the risk of under-exploring certain regions of the search space, we add an uncertainty-based acquisition function.
We quantify uncertainty with \emph{bootstrap ensembles}: multiple surrogate models trained on resampled datasets.
The degree of \emph{disagreement} between the surrogate models serves as a proxy for predictive uncertainty (i.e., how uncertain we are about the prediction).
To promote exploration in either objective, we use the sum of per-objective standard deviations (rather than variances) as the acquisition score to prioritize candidates with high uncertainty.
\looseness=-1

\begin{figure}[t!]
  \begin{center}
    \includegraphics[width=0.40\textwidth]{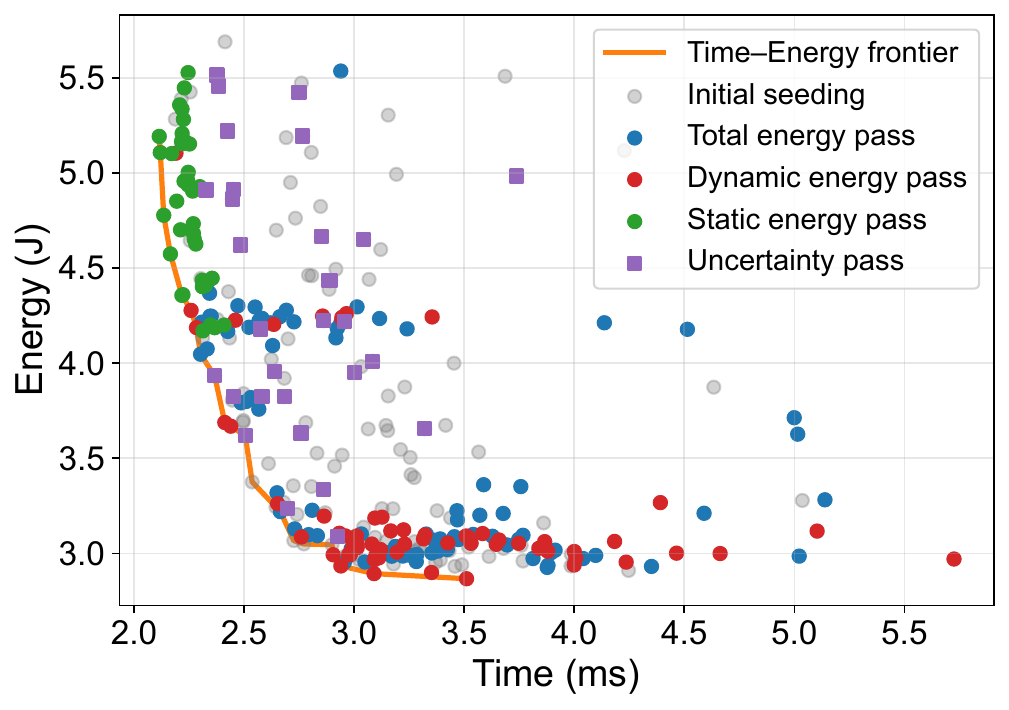}
    \vspace{-0.7em}
    \caption[Multi-pass MBO frontier]{
      Multi-pass MBO in action with the Llama 3.2 3B model's MLP--AllReduce partition.
      The three exploitation passes (total energy, dynamic energy, static energy) and the exploration pass expand the efficient frontier in complementary directions, as shown by the colors of the points that land on the time--energy frontier.
    }\label{fig:design-bo-multi-pass}
    \vspace{-0.5em}
  \end{center}
\end{figure}

\paragraph{The full picture.}
Figure~\ref{fig:design-bo-multi-pass} illustrates frontier expansion for the Llama 3.2 3B model's MLP--AllReduce partition.\footnote{Microbatch size 8, sequence length 4K, and tensor parallelism degree 8.}

Each pass expands the frontier in a complementary direction aligned with its acquisition function: dynamic energy pushes the frontier downward (lower energy), static energy leftward (lower time), and total energy toward the origin.
In contrast, a single acquisition function as used in standard BO is fundamentally insufficient for MBO, as it only targets one direction.
Algorithm~\ref{alg:design-mbo} summarizes the overall Kareus MBO procedure for one partition, and this is repeated for each type of partition independently.
Appendix~\ref{apdx:mbo-hyperparameters} provides detailed settings of hyperparameters including initial random sample size, batch size, ensemble size, and stopping criteria.

\begin{algorithm}[t]
  \caption{
    Multi-pass multi-objective Bayesian Optimization for one partition
  }
  \label{alg:design-mbo}
  \DontPrintSemicolon

  \tcp{Construct initial dataset for surrogate models}
  $\mathcal{D} \leftarrow$ $N_{\text{init}}$ random candidates sampled and evaluated \\
  
  \For{$b = 1,2,\dots,B_{\max}$}{
    Train surrogate models $\widehat{T}(x)$ and $\widehat{E}(x)$ on $\mathcal{D}$. \\

    \tcp{Exploitation: Hypervolume improvement}
    \ForEach{$x \in \mathcal{X} \setminus \mathcal{D}$}{
      Compute $\mathrm{HVI}_{\mathrm{tot}}(x)$, $\mathrm{HVI}_{\mathrm{dyn}}(x)$, $\mathrm{HVI}_{\mathrm{stat}}(x)$.
    }
  
    \tcp{Exploration: Uncertainty via bootstrap ensemble}
    \For{$m = 1$ \KwTo $M$}{
      Train $\widehat{T}^m(x)$ and $\widehat{E}^m(x)$ on resampled $\mathcal{D}$
    }

    \ForEach{$x \in \mathcal{X} \setminus \mathcal{D}$}{
      $\mathrm{Unc}(x) \leftarrow \sigma(\{\widehat{T}^m(x)\}) + \sigma(\{\widehat{E}^m(x)\})$ \\
    }
  
    \tcp{Multi-pass candidate selection}
    $\mathcal{C} \leftarrow \mathrm{TopK}(\{\mathrm{HVI}_{\mathrm{tot}}(x): \forall x \in \mathcal{X} \setminus \mathcal{D}\}, k_1)$ \\
    $\mathcal{C} \leftarrow \mathcal{C} \cup \mathrm{TopK}(\{\mathrm{HVI}_{\mathrm{dyn}}(x): \forall x \in \mathcal{X} \setminus \mathcal{D}\}, k_2)$ \\
    $\mathcal{C} \leftarrow \mathcal{C} \cup \mathrm{TopK}(\{\mathrm{HVI}_{\mathrm{stat}}(x): \forall x \in \mathcal{X} \setminus \mathcal{D}\}, k_3)$ \\
    $\mathcal{C} \leftarrow \mathcal{C} \cup \mathrm{TopK}(\{\mathrm{Unc}(x): \forall x \in \mathcal{X} \setminus \mathcal{D} \setminus \mathcal{C}\}, k - |\mathcal{C}|)$ \\
  
    \tcp{Evaluate candidates and update dataset}
    $\mathcal{D} \leftarrow \mathcal{D} \cup \{(x, T(x), E(x)) : x \in \mathcal{C}\}$ \\
  
    \tcp{Stopping condition}
    $\Delta \leftarrow$ Average HV improvement over $R$ batches \\
    \If{$\Delta < \varepsilon$}{
      \textbf{break}
    }
  }
  \Return{$\mathrm{GetFrontier}(\mathcal{D})$}
  \end{algorithm}

\vspace{0.25em}
\subsection{Composing Frontiers}\label{sec:design-global}

With each partition's time--energy frontier in hand, Kareus composes them into microbatch frontiers, and then into the iteration frontier.

\paragraph{Partition frontiers to microbatch frontier.}
A microbatch consists of multiple partitions executed sequentially.
To construct the microbatch frontier, Kareus enumerates combinations of partition execution schedules (frequency, SM allocation, launch timing), sums their time and energy, and prunes suboptimal combinations.
Algorithm~\ref{alg:microbatch_frontier} illustrates this process.

Two design decisions keep enumeration tractable.
First, Kareus enforces a uniform GPU frequency across all partitions within a microbatch, since frequency switching takes several milliseconds and is non-negligible compared to the latency of one partition.
Second, partitions of the same type share the same SM allocation and launch timing.
For instance, all Attention--AllReduce partitions in Figure~\ref{fig:design-optimization-overview} use identical configurations.
Allowing per-partition configurations would grow the search space exponentially in the number of partitions with little benefit.

  \begin{algorithm}[t]
    \caption{Microbatch frontier construction}
    \label{alg:microbatch_frontier}
    
    $\mathcal{C} \leftarrow \emptyset$  \tcp{All feasible (time, energy) pairs for microbatch}

    \ForEach{$f \in \mathrm{GPU\ frequencies}$}{
      \tcp{Cartesian product of configs across all partitions}
      $\Theta \leftarrow \Pi_{p \in \mathcal{P}} \left( \mathrm{SM\ allocations} \times \mathrm{launch\ timings} \right)_p$ \;
      \ForEach{$\theta \in \Theta$}{
        \tcp{Accumulate time and energy for microbatch}
        $T_m \leftarrow 0, E_m \leftarrow 0$
    
        \ForEach{partition $p \in \mathcal{P}$}{
          $T_m \leftarrow T_m + T_p(f, \theta[p])$ \;
          $E_m \leftarrow E_m + E_p(f, \theta[p])$ \;
        }
    
        \ForEach{non-partition component $c$}{
          $T_m \leftarrow T_m + T_c(f)$ \;
          $E_m \leftarrow E_m + E_c(f)$ \;
        }
    
        $\mathcal{C} \leftarrow
        \mathcal{C} \cup \{(T_m, E_m)\}$
      }
    }
    
    \Return{$\mathrm{GetFrontier}(\mathcal{C})$}
    
    \end{algorithm}

\paragraph{Microbatch frontiers to iteration frontier.}
The first row of Figure~\ref{fig:design-optimization-overview} shows an example of how forward (F) and backward (B) microbatches are scheduled across pipeline stages.
Iteration time is the sum of microbatch latencies along the critical path, while iteration energy combines the energy of all microbatches and the static energy consumed during idle times.
Kareus adopts Perseus's iterative algorithm~\cite{perseus-sosp24} to construct the training iteration frontier from microbatch frontiers.

\subsection{Generalizations}\label{sec:design-extensions}

\paragraph{Multiple communication kernels.}
Section~\ref{sec:design-partition} illustrated partitions using tensor parallelism as a running example.
Real workloads, however, present more complex patterns.
When consecutive communication kernels appear (e.g., multiple AllGather operations under context parallelism to aggregate key and value tensors~\cite{llama3-arxiv24,llama3-isca25}), Kareus fuses them into a single kernel that shares an SM allocation.
This reduces kernel launch overhead and keeps scheduling tractable.
With this, any model and communication scheme reduces to alternating sequences of computations and a single (possibly fused) communication, fitting the partitioned overlap model.

\paragraph{Short consecutive memory-bound computations.}
When multiple short, memory-bound operations appear consecutively (e.g., \texttt{BiasDropoutAdd} followed by \texttt{Norm}), Kareus groups them into one logical operation.
Treating them separately would expand the launch-timing search space with only marginal gains in solution quality.

\paragraph{Execution model switching.}
Partitioned overlap is not universally superior.
When the amount of work within a microbatch is small (e.g., small models or small microbatch sizes), splitting the microbatch can further reduce arithmetic intensity, leading to compute underutilization and higher static power wastage.
In these cases, sequential execution can be more energy-efficient.
To capture this, Kareus also profiles each sequentially executed microbatch at each frontier and includes them as candidates when constructing the microbatch time--energy frontier.
Thus, the final microbatch frontier will automatically be constructed with the better execution model.

\paragraph{GPU power model.}
Kareus adopts a simplified two-component GPU power model that decomposes GPU power into dynamic power and constant static power~\cite{gpuwattch,flipflop-icse26,perseus-sosp24}, which suffices for reasoning about GPU energy consumption in large model training. 
There are finer-grained models that further divide static power into chip leakage and constant power~\cite{accelwattch-micro21,wattchmen-ics26}, where chip leakage is sensitive to voltage.
We leave extending Kareus to such models as future work.

  \section{Implementation}\label{sec:implementation}

\begin{figure}[t!]
    \begin{center}
      \includegraphics[width=0.47\textwidth]{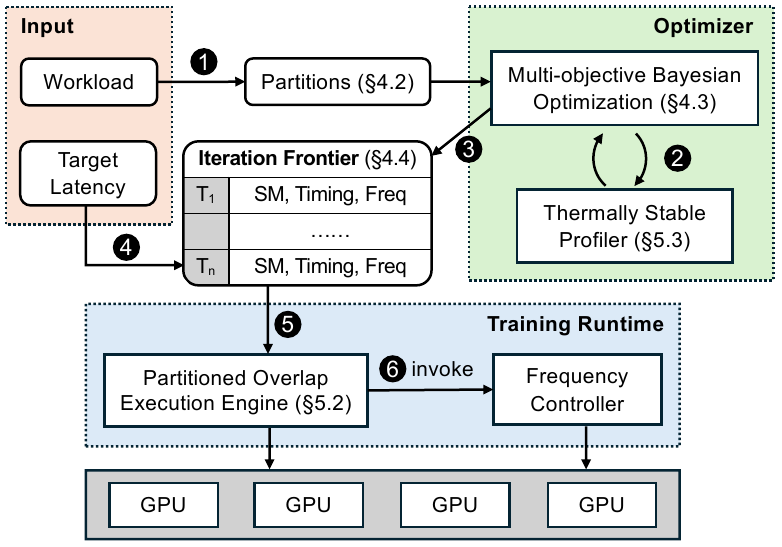}
      \vspace{-0.5em}
      \caption{
        Kareus system overview.
      }\label{fig:system-diagram}
      \vspace{-0.5em}
    \end{center}
  \end{figure}

\subsection{System Overview}\label{sec:implementation-system}

Figure~\ref{fig:system-diagram} shows an overview of the Kareus system. 
Given a workload, Kareus identifies partitions (\blackcircled{1}; \S\ref{sec:design-partition}) and runs multi-objective Bayesian optimization per partition (\blackcircled{2}; \S\ref{sec:design-bo}), evaluating candidates with the thermally stable profiler (\S\ref{sec:implementation-profiling}).
The per-partition frontiers are then composed into the iteration time--energy frontier (\blackcircled{3}; \S\ref{sec:design-global}).
At runtime, given a target iteration latency, Kareus selects an execution schedule from the frontier (\blackcircled{4}) and deploys it to the partitioned overlap execution engine (\blackcircled{5}; \S\ref{sec:implementation-engine}), which invokes the GPU frequency controller to switch each GPU's frequency asynchronously as planned (\blackcircled{6}).

\subsection{Partitioned Overlap Execution Engine}\label{sec:implementation-engine}

Kareus's training execution engine builds on Megatron-LM~\cite{megatronlm-sc21}, executing Transformer blocks as a sequence of partitions.
Before each microbatch execution, Kareus loads the corresponding microbatch configuration, switches between Megatron-LM's default sequential execution and our partitioned overlap execution mode, and configures the SM allocation and launch timing for partitioned overlap execution.
A custom \texttt{torch.autograd.Function} wraps all partitions of a Transformer block, allowing different partitions and partition schedules for forward and backward passes.

Communication kernels in Kareus are implemented with MSCCL++~\cite{mscclpp-github,mscclpp-arxiv25}, which provides fine-grained SM allocation control with grid size.
Computation and communication run on separate CUDA streams to enable overlap, and CUDA events control launch timing.
The GPU frequency controller is adapted from Perseus's open-source implementation~\cite{perseus-sosp24,zeus-github}.

\subsection{Thermally Stable Profiling}\label{sec:implementation-profiling}

Kareus measures the time and energy of a partition during MBO using Zeus~\cite{zeus-nsdi23,zeus-github}, which internally uses NVML~\cite{nvml}.
Accurate energy measurements require care; we present a detailed experimental analysis in Section~\ref{sec:evaluation-profiling}.

\paragraph{Measurement window.}
NVML's sampling interval on NVIDIA GPUs is approximately 100~ms, so millisecond-scale measurements incur large errors.
Kareus executes each partition repeatedly over a 5-second window, after which energy measurements stabilize.

\paragraph{Thermal cooldown.}
The power consumption of any hardware is temperature-dependent.
Without cooldown between candidates, profiles of earlier partitions may heat up the GPU and bias subsequent measurements.
Kareus inserts a 5-second cooldown period, which reliably brings the GPU below 32$^\circ$C in our environment; the required duration depends on the server's cooling capability.
In sum, profiling each candidate takes approximately 13 seconds in our setup, including initialization, warm-up, measurement, and cooldown.

  \section{Evaluation}\label{sec:evaluation}

We evaluate Kareus on 14 workloads and compare it against state-of-the-art baselines.
Our key findings are as follows:

\begin{denseitemize}
  \item Kareus achieves a superior time--energy frontier compared to prior approaches.
In end-to-end training on real GPUs, it delivers up to 28.3\% energy reduction under the same time budget and up to 27.5\% time reduction under the same energy budget compared to the baselines (\S\ref{sec:evaluation-end-to-end}).

  \item In emulated large-scale training, Kareus consistently outperforms the baselines, achieving time and energy reductions comparable to real-world training results (\S\ref{sec:evaluation-emulation}).
  
  \item Ablation and sensitivity analyses show that jointly optimizing dynamic and static energy is necessary (\S\ref{sec:evaluation-ablation}) and is effective across a range of microbatch sizes (\S\ref{sec:evaluation-sensitivity}).

  \item Kareus's MBO algorithm constructs the time--energy frontier with reasonable overhead, and each pass in multi-pass candidate selection is indispensable (\S\ref{sec:evaluation-bo}); the thermally stable profiler delivers stable energy measurements (\S\ref{sec:evaluation-profiling}).
\end{denseitemize}

\subsection{Experimental Setup}\label{sec:evaluation-setup} 

\paragraph{Testbed.}
Experiments are conducted on 16 NVIDIA A100 40GB GPUs deployed across two AWS p4d.24xlarge instances. 
GPUs are fully connected intra-node via NVSwitch, and cross-node bandwidth is 400 Gbps.

\paragraph{Workloads.}
We evaluate Kareus on Llama~3.2~3B~\cite{llama3-arxiv24} and Qwen~3~1.7B~\cite{qwen3-arxiv25} on the physical testbed, and perform large-scale emulation for Llama~3.3~70B~\cite{llama3-arxiv24}.
For pipeline parallelism, we manually partition stages such that stages are as balanced as possible, following Perseus~\cite{perseus-sosp24}.
For context parallelism, we follow the scheme used in Llama 3~\cite{llama3-arxiv24}, where key--value tensors are collected across GPUs via AllGather.
We use activation checkpointing to reduce memory pressure.

\paragraph{Baselines.}
We mainly compare against three baselines:
\begin{denseitemize}
  \item \textbf{Megatron-LM (``M'').}
    Baseline Megatron-LM~\cite{megatronlm-sc21} with the sequential execution model and maximum GPU frequency.
    Produces a single point on the time--energy plane.

  \item \textbf{Megatron-LM + Perseus (``M+P'').}
    Perseus~\cite{perseus-sosp24} applied to the above.
    This produces a time--energy frontier.

  \item \textbf{Nanobatching + Perseus (``N+P'').}
    A training engine based on Megatron-LM that implements nanobatching, a special case of Kareus's partitioned overlap execution model (\S\ref{sec:analysis-case-study}), integrated with Perseus like the above.
    This also produces a time--energy frontier.
\end{denseitemize}

\paragraph{Metrics.}
Kareus's gain is fundamentally a superior time--energy tradeoff frontier, which cannot be fully captured by a single number.
Therefore, we define two modes of comparison based on two viable use cases. %

\begin{denseitemize}
  \item \textbf{Max-throughput comparison.}
  When iteration time constraints (e.g., deadlines, stragglers) are not present, the training pipeline operates in maximum-throughput mode.
  For methods that produce a time--energy frontier, this means operating at the leftmost (lowest time) point.
  We set Megatron-LM as the baseline, and report iteration time and energy reduction (\%) of M+P, N+P, and Kareus.

  \item \textbf{Frontier improvement.}
  We set M+P as the baseline, and define two metrics that quantify frontier improvements for N+P and Kareus (Figure~\ref{fig:eval-frontier-example}):
  (1) \emph{Iso-time energy reduction} (\%): energy reduction with iteration time deadline set as M+P's minimum iteration time; and
  (2) \emph{Iso-energy time reduction} (\%): time reduction with iteration energy budget set as M+P's minimum iteration energy.

\end{denseitemize}

\begin{figure}[t!]
    \begin{center}
      \includegraphics[width=0.30\textwidth]{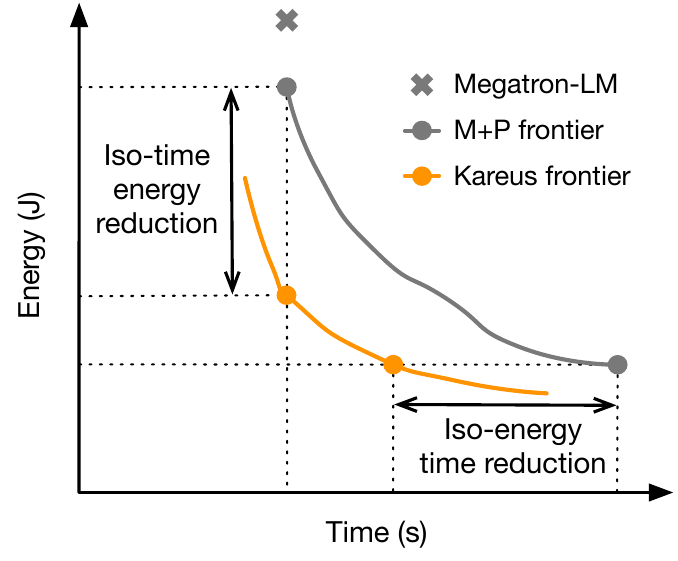}
      \vspace{-1em}
      \caption{Illustrative example showing iteration time--energy frontiers. For the max-throughput comparison, we compare the leftmost points of each frontier with Megatron-LM. For frontier improvement, we report the iso-time energy reduction and iso-energy time reduction compared to Megatron-LM + Perseus.
      }\label{fig:eval-frontier-example}
      \vspace{-0.5em}
    \end{center}
  \end{figure}

\subsection{End-to-End Results}\label{sec:evaluation-end-to-end}

\begin{table*}[t]
\centering
\caption{
  [Experiment] Iteration time and energy reductions (\%) relative to Megatron-LM (higher is better) for all methods under the max-throughput configuration.
  Pipeline parallelism degree is fixed at 2, and number of microbatches is fixed at 8.
  \texttt{OOM} indicates that the GPU runs out of memory at the corresponding settings.
  Negative values indicate increased time or energy relative to the baseline.
}\label{tab:e2e-leftmost}
\vspace{-0.5em}
\resizebox{0.94\textwidth}{!}{%
\begin{tabular}{llrr|rrrrrr|r}
    \toprule
    \multicolumn{1}{c}{\multirow{3}{*}{\textbf{Model}}} &
    \multicolumn{1}{c}{\multirow{3}{*}{\textbf{Parallelism}}} &
    \multicolumn{1}{c}{\multirow{3}{*}{\textbf{\begin{tabular}[c]{@{}c@{}}ubatch\\Size\end{tabular}}}} &
    \multicolumn{1}{c}{\multirow{3}{*}{\textbf{\begin{tabular}[c]{@{}c@{}}Sequence\\Length\end{tabular}}}} &
    \multicolumn{3}{c}{\textbf{Time Reduction (\%)}} &
    \multicolumn{3}{c}{\textbf{Energy Reduction (\%)}} &
    \multicolumn{1}{c}{\multirow{3}{*}{\textbf{\begin{tabular}[c]{@{}c@{}}TFLOP/s\\/GPU\end{tabular}}}} \\
    \cmidrule(lr){5-7}\cmidrule(lr){8-10}
     &  &  &  &
    \multicolumn{1}{c}{Megatron-LM} &
    \multicolumn{1}{c}{Nanobatching} &
    \multicolumn{1}{c}{\multirow{2}{*}{Kareus}} &
    \multicolumn{1}{c}{Megatron-LM} &
    \multicolumn{1}{c}{Nanobatching} &
    \multicolumn{1}{c|}{\multirow{2}{*}{Kareus}} & \\
     &  &  &  & \multicolumn{1}{c}{+Perseus} & \multicolumn{1}{c}{+Perseus} &  & \multicolumn{1}{c}{+Perseus} & \multicolumn{1}{c}{+Perseus} & \multicolumn{1}{c|}{} & \\
    \midrule
    \multirow{3}{*}{Llama 3.2 3B} & \multirow{3}{*}{TP8} & 8 & 4K & $-0.3$ & 8.5 & 12.3 & 10.0 & 15.5 & 19.6 & 104.8 \\
     &  & 8 & 8K & \multicolumn{6}{c|}{OOM} & \multicolumn{1}{c}{} \\
     &  & 16 & 4K & \multicolumn{6}{c|}{OOM} & \multicolumn{1}{c}{} \\
    \cmidrule(lr){2-11}
    \multirow{3}{*}{Llama 3.2 3B} & \multirow{3}{*}{CP2TP4} & 8 & 4K & 0.2 & 0.0 & 5.2 & 7.4 & 7.3 & 14.4 & 117.3 \\
     &  & 8 & 8K & $-1.4$ & 1.5 & 6.2 & 11.5 & 7.3 & 14.9 & 138.0 \\
     &  & 16 & 4K & $-0.7$ & 2.4 & 8.0 & 8.9 & 7.0 & 16.2 & 131.0 \\
    \cmidrule(lr){1-11}
    \multirow{3}{*}{Qwen 3 1.7B} & \multirow{3}{*}{TP8} & 8 & 4K & $-0.5$ & 5.6 & 12.2 & 7.7 & 12.8 & 22.1 & 88.5 \\
     &  & 8 & 8K & 0.1 & 9.0 & 14.9 & 7.9 & 12.2 & 15.1 & 107.2 \\
     &  & 16 & 4K & $-0.1$ & 9.4 & 14.8 & 7.6 & 13.4 & 17.3 & 97.0 \\
    \cmidrule(lr){2-11}
    \multirow{3}{*}{Qwen 3 1.7B} & \multirow{3}{*}{CP2TP4} & 8 & 4K & $-0.5$ & $-20.4$ & $-0.5$ & 7.1 & 3.1 & 7.1 & 88.9 \\
     &  & 8 & 8K & 0.5 & 1.7 & 8.0 & 7.4 & 6.7 & 8.7 & 119.4 \\
     &  & 16 & 4K & 0.0 & 3.9 & 10.5 & 6.9 & 6.9 & 11.2 & 110.6 \\
    \bottomrule
    \end{tabular}%
}
\end{table*}

In this section, we evaluate the iteration time and energy performance on the testbed GPUs for the max-throughput (\S\ref{sec:eval-e2e-leftmost}) and frontier improvement (\S\ref{sec:eval-e2e-frontier}) comparisons.
The workload configurations (parallelism, microbatch size, and sequence length) can be found in Table~\ref{tab:e2e-leftmost}.

\subsubsection{Max-Throughput Comparison}\label{sec:eval-e2e-leftmost}

Table~\ref{tab:e2e-leftmost} reports the iteration time and energy reductions achieved by all methods under the max-throughput regime.  
Overall, Kareus achieves up to 14.9\% reduction in iteration time and 22.1\% reduction in energy consumption compared to Megatron-LM, strictly outperforming the baselines on time and energy.
\looseness=-1

\paragraph{Time reduction.}
The time reduction mainly comes from reducing SM idle time caused by resource underutilization due to exposed communication or resource contention by poorly scheduled kernel overlaps.
As a result, the time reduction trend largely follows the communication overhead induced by different model configurations. 
Compared to Megatron-LM, both overlap execution models achieve larger gains under tensor parallelism (TP) than under context parallelism with tensor parallelism (CP+TP).
Moreover, increasing the microbatch size yields greater time reduction than increasing the sequence length, since the former introduces relatively higher communication overhead.

When compared to Nanobatching, Kareus provides larger additional gains under CP+TP than under TP alone, because fine-grained control of communication scheduling becomes more critical for complex communication patterns. 
Moreover, Nanobatching can significantly slow down computation (e.g., on Qwen~1.7B with CP2+TP4, microbatch size 8, and sequence length 4K), as it leads to GPU underutilization when the workload is small. In contrast, Kareus can automatically switch to the sequential execution model (\S\ref{sec:design-extensions}).

\paragraph{Energy reduction.}
Energy reflects the combined effects of time and power, which is why energy reduction trends do not always track time reduction trends.
In many cases, Nanobatching + Perseus reduces time but yields smaller energy reduction than Megatron-LM + Perseus, because overlap raises GPU utilization and thus also power, and the power increase offsets time reduction.
In contrast, Kareus achieves larger energy reductions by simultaneously optimizing overlap and GPU frequency, enabling greater time reduction while also identifying opportunities to reduce power, as the following case studies show.

\paragraph{Case studies.}
We find an interesting case in Qwen 3 1.7B with TP8, microbatch size 8, and sequence length 4K.
Both baselines operate the critical-path microbatch at 1,410 MHz (the maximum GPU frequency).
However, Kareus discovers an overlap configuration that runs at 1,350 MHz yet attains the fastest latency, resulting in an additional 10.7\% energy reduction over Nanobatching + Perseus.
This happens because nanobatching overlap at 1,410 MHz improves resource utilization but raises instantaneous power, which triggers GPU frequency throttling.
Thus, the time-averaged GPU frequency (which determines execution time) is close to 1,350 MHz, while the time-averaged dynamic power (which determines dynamic energy) remains closer to that of 1,410 MHz.
Operating at a constant frequency provably consumes less energy than operating at a fluctuating frequency with the same average, which highlights the importance of jointly optimizing kernel scheduling and GPU frequency scaling (\S\ref{sec:analysis-joint-control}).

Figure~\ref{fig:case-study} further illustrates representative partition execution schedules across microbatches of the aforementioned configuration.
In the forward and backward Attention--AllReduce partitions, Kareus decides not to overlap \texttt{AllReduce} with memory-bound kernels (e.g., \texttt{Norm}) at a higher frequency (1,350 MHz); instead, it shifts the overlap to more memory-bound kernels at a lower frequency (1,290 MHz), consistent with the analysis in Section~\ref{sec:analysis}.
For the MLP--AllReduce partitions, the selected 
configurations differ due to variations in kernel lengths and 
types.
Kareus starts overlap from the beginning and allocates only 6 SMs to \texttt{AllReduce} at 1,350 MHz, which provides enough time to hide communication while leaving most SMs available for the long compute-bound \texttt{Linear 1} kernels.

\begin{figure*}[t]
    \centering
    \includegraphics[width=0.85\textwidth]{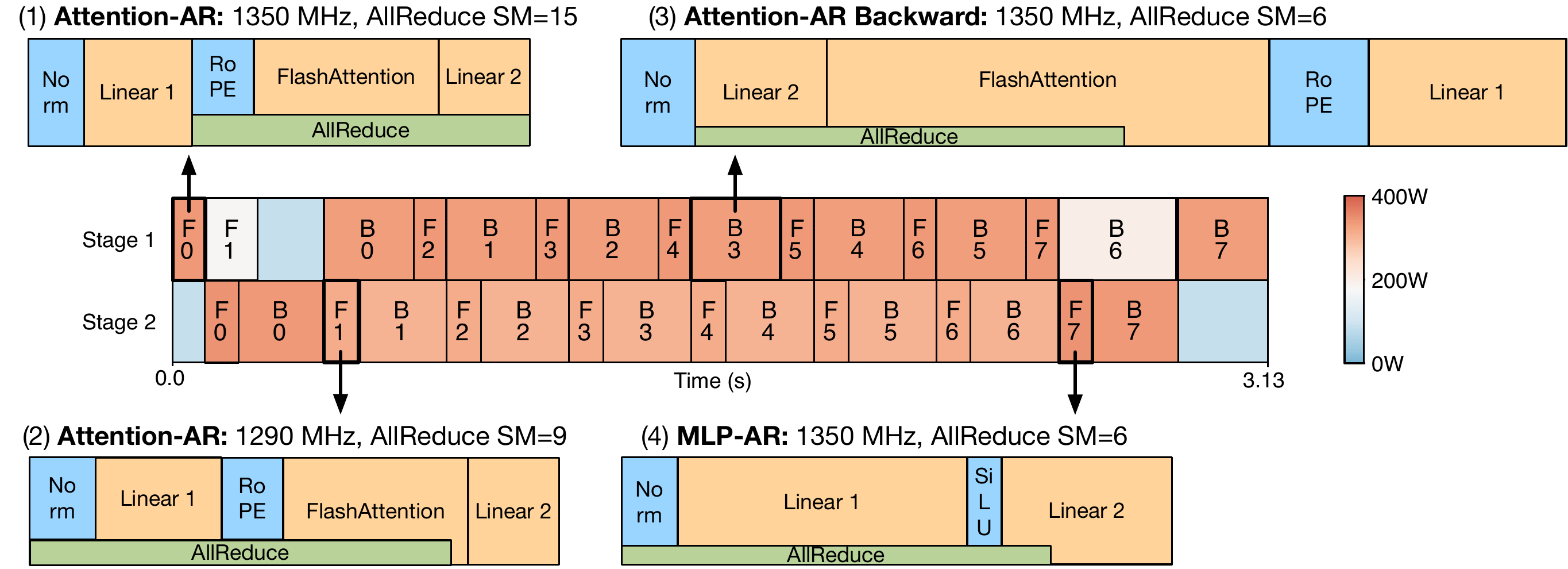}
    \vspace{-0.5em}
    \caption{Case study of Qwen 3 1.7B with TP8, microbatch size 8, and sequence length 4K. The pipeline schedule is colored by average GPU power draw, and the partition execution schedules are drawn to scale for time. (1) and (2) compare the forward Attention--AllReduce partition at 1,350 MHz and 1,290 MHz, respectively. (3) and (4) show the backward Attention--AllReduce and forward MLP--AllReduce partitions at 1,350 MHz. Backward microbatch is partitioned following the definition in Section~\ref{sec:design-partition}; \texttt{Norm} is treated as the first kernel because it follows the AllReduce kernel in the transformer block, and other kernels are ordered in reverse.}\label{fig:case-study}
    \vspace{-1em}
\end{figure*}

\subsubsection{Frontier Improvement}\label{sec:eval-e2e-frontier}

Table~\ref{tab:e2e-iso} reports the iso-time energy reduction and iso-energy time reduction for Nanobatching + Perseus and Kareus, which demonstrate the expansion of the time--energy frontier.
Overall, Kareus achieves up to 28.3\% iso-time energy reduction and 27.5\% iso-energy time reduction compared to Megatron-LM + Perseus, establishing a strictly better time--energy tradeoff over Nanobatching + Perseus.
Figure~\ref{fig:eval-frontier-real} shows a representative frontier comparison for Qwen 3 1.7B with CP2, TP4, microbatch size 16, and sequence length 4K.
We present the complete frontier comparison plots for all model configurations in Appendix~\ref{apdx:frontier-e2e}.

\begin{figure}[t]
    \centering
    \includegraphics[width=0.9\columnwidth]{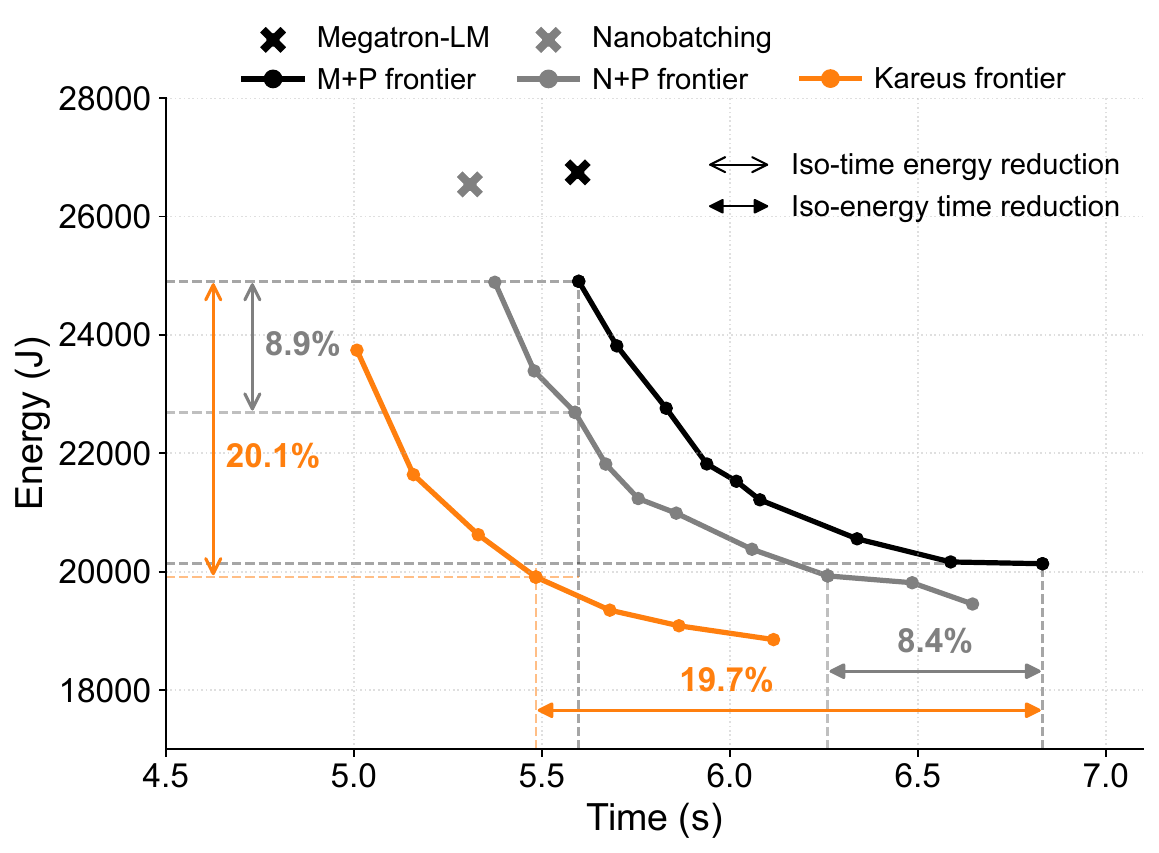}
    \vspace{-0.5em}
    \caption{[Experiment] Representative iteration time--energy frontier comparison for Qwen 3 1.7B with CP2, TP4, microbatch size 16, and sequence length 4K.}\label{fig:eval-frontier-real}
\end{figure}

Under the same time budget, where static energy is fixed, Kareus achieves higher GPU utilization and can therefore execute at lower GPU frequencies to complete the workload, reducing dynamic energy consumption. Conversely, under the same energy budget, Kareus minimizes static energy wastage, allowing more energy to be allocated to dynamic execution at higher GPU frequencies to complete the workload faster.

\begin{table}[t]
    \centering
    \caption{[Experiment] Iso-time energy reduction (\%) and iso-energy time reduction (\%) relative to Megatron-LM + Perseus for Nanobatching + Perseus and Kareus. "$-$" indicates that no configuration satisfies that constraint. For example, if Nanobatching + Perseus has a longer iteration time than Megatron-LM + Perseus at the leftmost point, then no iso-time point exists.}\label{tab:e2e-iso}
    \vspace{-0.5em}
\resizebox{0.48\textwidth}{!}{%
\begin{tabular}{llrr|rrrr}
    \toprule
    \multicolumn{4}{c}{} & \multicolumn{2}{c}{\textbf{\shortstack{Iso-Time Energy\\Reduction (\%)}}} & \multicolumn{2}{c}{\textbf{\shortstack{Iso-Energy Time\\Reduction (\%)}}} \\
    \cmidrule(lr){5-6}\cmidrule(lr){7-8}
    \multicolumn{1}{c}{} & \multicolumn{1}{c}{} & \multicolumn{1}{c}{} & \multicolumn{1}{c}{} & N+P & Kareus & N+P & Kareus \\
    \midrule
    \multirow{3}{*}{3B} & \multirow{3}{*}{TP8} & 8 & 4K & 21.0 & 24.3 & 18.9 & 24.0 \\
     &  & 8 & 8K & \multicolumn{4}{c}{OOM} \\
     &  & 16 & 4K & \multicolumn{4}{c}{OOM} \\
    \cmidrule(lr){2-8}
    \multirow{3}{*}{3B} & \multirow{3}{*}{CP2TP4} & 8 & 4K & $-$ & 17.4 & 7.0 & 13.9 \\
     &  & 8 & 8K & 4.0 & 12.4 & 4.4 & 16.4 \\
     &  & 16 & 4K & 6.6 & 20.4 & 5.1 & 12.5 \\
    \cmidrule(lr){1-8}
    \multirow{3}{*}{1.7B} & \multirow{3}{*}{TP8} & 8 & 4K & 16.8 & 26.8 & 17.8 & 27.5 \\
     &  & 8 & 8K & 20.0 & 23.1 & 17.7 & 23.1 \\
     &  & 16 & 4K & 20.4 & 28.3 & 19.9 & 26.7 \\
    \cmidrule(lr){2-8}
    \multirow{3}{*}{1.7B} & \multirow{3}{*}{CP2TP4} & 8 & 4K & $-$ & 0.0 & $-$ & 5.1 \\
     &  & 8 & 8K & $-0.7$ & 15.0 & $-0.4$ & 12.1 \\
     &  & 16 & 4K & 8.9 & 20.1 & 8.4 & 19.7 \\
    \bottomrule
    \end{tabular}
}
\vspace{0.3em}
\end{table}

\subsection{Large-Scale Emulation}\label{sec:evaluation-emulation}
\vspace{-0.5em}

In this section, we conduct large-scale emulation of Llama 3.3 70B based on smaller-scale profiling on our testbed hardware and compare against Megatron-LM + Perseus.
We use the emulator from Perseus~\cite{perseus-sosp24}.
Results show that trends in time and energy reduction are consistent with those in Section~\ref{sec:evaluation-end-to-end} measured on testbed GPUs.

\paragraph{Methodology and parameters.}
For Kareus, we obtain the iteration-level time--energy frontier using the MBO algorithm described in Section~\ref{sec:design} by composing partition-level frontiers. 
For Megatron-LM + Perseus, we profile the time and energy of each Transformer block using the same profiling methodology described in Section~\ref{sec:implementation-profiling}, and construct the iteration-level frontier.
We perform strong scaling as we vary the number of GPUs (Table~\ref{tab:emulation-configs}), while fixing the global batch size to 2048, which is adopted in the Llama 3 training~\cite{llama3-arxiv24}.
We use pipeline parallelism degree 10 and tensor parallelism degree 8, with microbatch size 4 and sequence length 4K~\cite{llama3-arxiv24}.

\begin{table}[t]
    \centering
    \caption{Strong scaling configurations for large-scale emulation.}\label{tab:emulation-configs}
    \vspace{-0.5em}
    \resizebox{0.42\textwidth}{!}{%
    \begin{tabular}{rrrr}
      \toprule
      \multicolumn{1}{r}{\multirow{2}{*}{\textbf{\# GPUs}}} &
      \multicolumn{1}{r}{\multirow{2}{*}{\textbf{\# Pipelines}}} &
      \multicolumn{1}{r}{\textbf{\# Microbatches}} &
      \multicolumn{1}{r}{\multirow{2}{*}{\textbf{Global Batch Size}}} \\
       &  & \multicolumn{1}{r}{\textbf{Per Pipeline}} &  \\
      \midrule
       10240 & 128 & 16  & \multirow{4}{*}{2048} \\
       5120  & 64  & 32  &  \\
       2560  & 32  & 64  &  \\
       1280  & 16  & 128 &  \\
      \bottomrule
    \end{tabular}
    }
  \end{table}

\paragraph{Max-throughput comparison.}
We report the max-throughput comparison results for Megatron-LM + Perseus and Kareus across different numbers of microbatches in Table~\ref{tab:emulation-leftmost}, and plot the corresponding iteration-level time--energy frontiers in Appendix~\ref{apdx:frontier-emulation}. 
The energy reductions achieved by both methods are generally higher than those observed in the testbed experiments, because the larger number of pipeline stages and microbatches on non-critical paths can be slowed down to save dynamic energy.
As the number of microbatches increases, the relative portion of pipeline bubbles during the warm-up and cooldown phases---which are normally reduced down to the lowest frequency---decreases, leading to a slight decrease in energy reduction.
The time reduction achieved by Kareus is slightly lower than that of the TP=8 case in the testbed experiments, because a smaller microbatch size is used in this setting due to the GPU memory constraints.

\begin{table}[t]
    \centering
    \caption{[Emulation] Iteration time and energy reduction (\%) relative to Megatron-LM of Megatron-LM + Perseus and Kareus under the max-throughput configuration for Llama 3.3 70B.}\label{tab:emulation-leftmost}
    \vspace{-0.5em}
    \resizebox{0.46\textwidth}{!}{%
\begin{tabular}{r|rrrr}
    \toprule
    \multirow{2}{*}{\textbf{\# Microbatches}} & \multicolumn{2}{c}{\textbf{Time Reduction (\%)}} & \multicolumn{2}{c}{\textbf{Energy Reduction (\%)}} \\
    \cmidrule(lr){2-3}\cmidrule(lr){4-5}
     & M+P & Kareus & M+P & Kareus \\
    \midrule
    16 & 0.0 & 9.3 & 15.0 & 20.2 \\
    32 & 0.0 & 9.2 & 14.3 & 20.0 \\
    64 & 0.0 & 9.1 & 13.8 & 19.8 \\
    128 & 0.0 & 9.1 & 13.5 & 19.7 \\
    \bottomrule
    \end{tabular}
    }
    \vspace{0.3em}
\end{table}

\paragraph{Frontier improvement.}
We report the frontier improvement results for Kareus across different numbers of microbatches in Table~\ref{tab:emulation-iso}.
As the number of microbatches increases, the iso-time energy reduction improves, because the absolute iteration time grows, providing more slack for dynamic energy savings.
In contrast, the iso-energy time reduction decreases, since static energy consumes a larger fraction of the fixed energy budget.

\begin{table}[t]
    \centering
    \caption{[Emulation] Iso-time energy reduction (\%) and iso-energy time reduction (\%) relative to Megatron-LM + Perseus of Kareus for Llama 3.3 70B.
    }\label{tab:emulation-iso}
    \vspace{-0.5em}
    \resizebox{0.43\textwidth}{!}{%
    \begin{tabular}{l|rrrr}
        \toprule
         & \multicolumn{4}{c}{\textbf{\# Microbatches}} \\
         & 16 & 32 & 64 & 128 \\
        \midrule
        \textbf{Iso-Time Energy Reduction (\%)} & 11.6 & 14.1 & 15.3 & 15.1 \\
        \textbf{Iso-Energy Time Reduction (\%)} & 19.1 & 16.8 & 16.4 & 16.0 \\
        \bottomrule
        \end{tabular}
}
\vspace{0.3em}
\end{table}

\subsection{Ablation Study on Search Space}\label{sec:evaluation-ablation}

In this section, we conduct an ablation study to demonstrate the necessity of jointly optimizing dynamic and static energy in Kareus. 
We compare Kareus against three ablated variants: removing dynamic energy optimization (i.e., frequency scaling), removing static energy optimization (i.e., kernel scheduling with SM allocation and launch timing), and removing both (i.e., Nanobatching). 
We evaluate all systems on Qwen 3 1.7B with PP=2, TP=8, 8 microbatches of size 8, and sequence length 4K, the same configuration used in end-to-end experiments (Tables~\ref{tab:e2e-leftmost} and~\ref{tab:e2e-iso}).

Table~\ref{tab:ablation} reports the iteration time and energy increases of each ablated variant relative to Kareus under the max-throughput configuration.
Removing frequency scaling results in a 12.9\% increase in energy, and removing kernel scheduling leads to a 10.8\% increase in energy.
As expected, removing either of the optimization dimensions fails to deliver the full optimization potential of Kareus, demonstrating the necessity of joint optimization of both dynamic and static energy.

\begin{table}[t]
    \centering
    \caption{[Experiment] Iteration time and energy increase (\%) of ablated variants relative to Kareus under the max-throughput configuration.}\label{tab:ablation}
    \vspace{-0.5em}
    \resizebox{0.45\textwidth}{!}{%
\begin{tabular}{l|rr}
    \toprule
    \multicolumn{1}{c|}{\textbf{System}} &
    \multicolumn{1}{c}{\textbf{Time Inc. (\%)}} &
    \multicolumn{1}{c}{\textbf{Energy Inc. (\%)}} \\
    \midrule
    Kareus w/o frequency & 1.0 & 12.9 \\
    Kareus w/o kernel schedule & 8.2 & 10.8 \\
    Nanobatching & 7.8 & 20.6 \\
    \bottomrule
    \end{tabular}
    }
    \vspace{0.3em}
\end{table}

\subsection{Sensitivity to Microbatch Size}\label{sec:evaluation-sensitivity}

In this section, we further study how the microbatch size, an important training parameter that directly affects computation arithmetic intensity, the ratio of communication to computation, and pipeline imbalance, affects the effectiveness of Kareus.
We choose Qwen 3 1.7B with TP=8 and sequence length 4K as in Section~\ref{sec:evaluation-ablation} and vary the microbatch size from 8 to 20.\footnote{Larger microbatch sizes are not evaluated due to GPU memory capacity.}
Tables~\ref{tab:sensitivity-leftmost} and~\ref{tab:sensitivity-iso} report the max-throughput comparison and frontier improvement results, respectively; Appendix~\ref{apdx:frontier-sensitivity} shows the corresponding time--energy frontiers.

Overall, Kareus demonstrates consistent effectiveness across varying microbatch sizes.
As microbatch size increases, Kareus achieves greater time reduction (Table~\ref{tab:sensitivity-leftmost}), as communication--computation overlap more effectively utilizes SMs when decomposing microbatches into nanobatches.
This is also why microbatch size 20 achieves the best iso-time and iso-energy reductions (Table~\ref{tab:sensitivity-iso}).
Max-throughput energy reduction (Table~\ref{tab:sensitivity-leftmost}) shows a more complex trend for two reasons.
First, microbatch size 8 has a pronounced energy reduction due to the same reason as the case study in Section~\ref{sec:eval-e2e-leftmost}, where Kareus discovers a more energy-efficient critical path execution schedule that runs at a lower frequency.
Second, changing the microbatch size impacts pipeline stage imbalance, which affects the potential for energy savings by slowing down non-critical microbatches---this worked in favor of microbatch size 12 and 20 by increasing imbalance and allowing more energy savings.

\begin{table}[t]
    \centering
    \caption{[Experiment] Iteration time and energy reduction (\%) relative to Megatron-LM of Megatron-LM + Perseus and Kareus across different microbatch sizes.}\label{tab:sensitivity-leftmost}
    \vspace{-0.5em}
    \resizebox{0.43\textwidth}{!}{%
\begin{tabular}{r|rrrr}
    \toprule
    \multirow{2}{*}{\textbf{ubatch Size}} & \multicolumn{2}{c}{\textbf{Time Reduction (\%)}} & \multicolumn{2}{c}{\textbf{Energy Reduction (\%)}} \\
    \cmidrule(lr){2-3}\cmidrule(lr){4-5}
     & M+P & Kareus & M+P & Kareus \\
    \midrule
    8 & -0.5 & 12.2 & 7.7 & 22.1 \\
    12 & -0.4 & 14.7 & 9.4 & 17.6 \\
    16 & -0.1 & 14.8 & 7.6 & 17.3 \\
    20 & -0.2 & 18.1 & 9.4 & 18.8 \\
    \bottomrule
    \end{tabular}
    }
    \vspace{0.3em}
\end{table}

\begin{table}[t]
    \centering
    \caption{[Experiment] Iso-time energy reduction (\%) and iso-energy time reduction (\%) relative to Megatron-LM + Perseus of Kareus across different microbatch sizes.}\label{tab:sensitivity-iso}
    \vspace{-0.5em}
    \resizebox{0.43\textwidth}{!}{%
    \begin{tabular}{l|rrrr}
        \toprule
         & \multicolumn{4}{c}{\textbf{Microbatch Size}} \\
         & 8 & 12 & 16 & 20 \\
        \midrule
        \textbf{Iso-Time Energy Reduction (\%)} & 26.8 & 28.6 & 28.3 & 29.8 \\
        \textbf{Iso-Energy Time Reduction (\%)} & 27.5 & 27.3 & 26.7 & 28.8 \\
        \bottomrule
        \end{tabular}
}
\vspace{0.3em}
\end{table}

\subsection{MBO Analysis}\label{sec:evaluation-bo}

\paragraph{Overhead and breakdown.}
Kareus's MBO (\S\ref{sec:design-bo}) obtains the time--energy frontier for each partition.
In our testbed experiments, MBO for different partitions is conducted in parallel and takes 2 hours (32 GPU hours) on average. 
This overhead is substantially lower than that of exhaustive search (307 hours, as discussed in Section~\ref{sec:design-formulation}), and is negligible compared to typical end-to-end training times (e.g., 54 days for Llama 3~\cite{llama3-arxiv24}).

The overhead of MBO is dominated by thermally stable profiling of candidates (\S\ref{sec:implementation-profiling}), accounting for 97\% of the total overhead.
For a typical partition in our setting, which evaluates a batch of 32 candidates per MBO iteration, thermally stable profiling takes 6.9 minutes on average, while surrogate model training and acquisition function evaluation takes 11 seconds on average.

\paragraph{Multi-pass candidate selection.}
After random candidate initialization, MBO selects candidates across four passes: total energy pass, dynamic energy pass, static energy pass, and uncertainty pass (\S\ref{sec:design-bo}).
On average, random initialization contributes 35\% of the selected candidates---they happen to be on the frontier.
The other 65\% of the selected candidates are discovered by the total energy pass, dynamic energy pass, static energy pass, and uncertainty pass, each 25\%, 22\%, 12\%, and 6\% of the selected candidates, respectively.
All passes contribute a non-negligible portion of the final candidates.

\subsection{Thermally Stable Profiler}
\label{sec:evaluation-profiling}

During MBO, Kareus adopts thermally stable profiling to measure the time and energy of each candidate (\S\ref{sec:implementation-profiling}).
We conduct an experimental study using the Attention--AllReduce partition of Llama 3.2 3B on 8 NVIDIA A100 GPUs\footnote{Batch size is 4 and sequence length is 4K, with tensor parallelism degree 8, running at 1,410 MHz.} to analyze the impact of measurement window and cooldown duration on the energy measurement accuracy.

\begin{figure}[t]
    
    \subfloat[Measurement duration\label{fig:profiling-duration}]{
        \includegraphics[width=0.48\columnwidth]{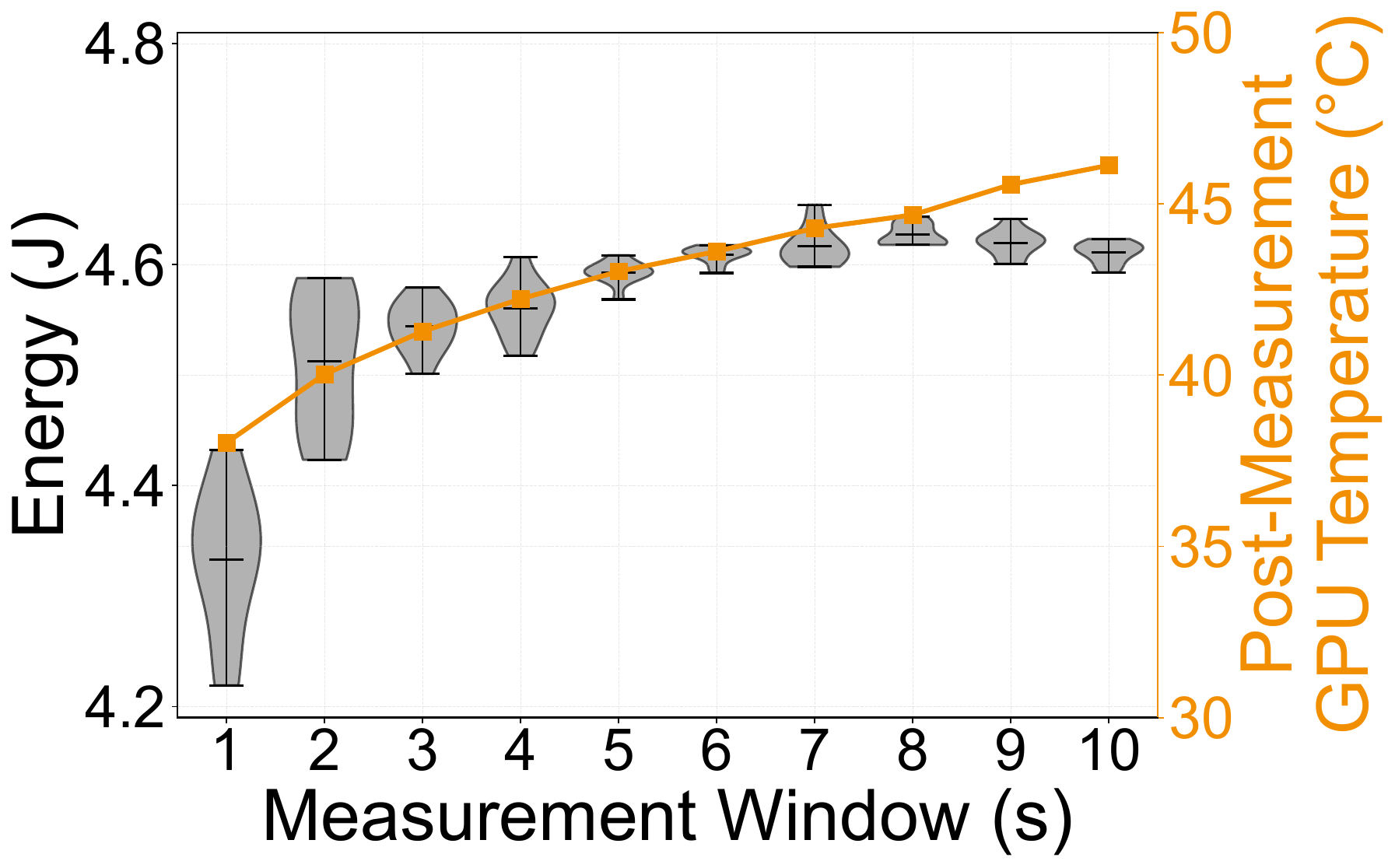}
    }
    \subfloat[Cooldown duration\label{fig:profiling-cooldown}]{
        \includegraphics[width=0.48\columnwidth]{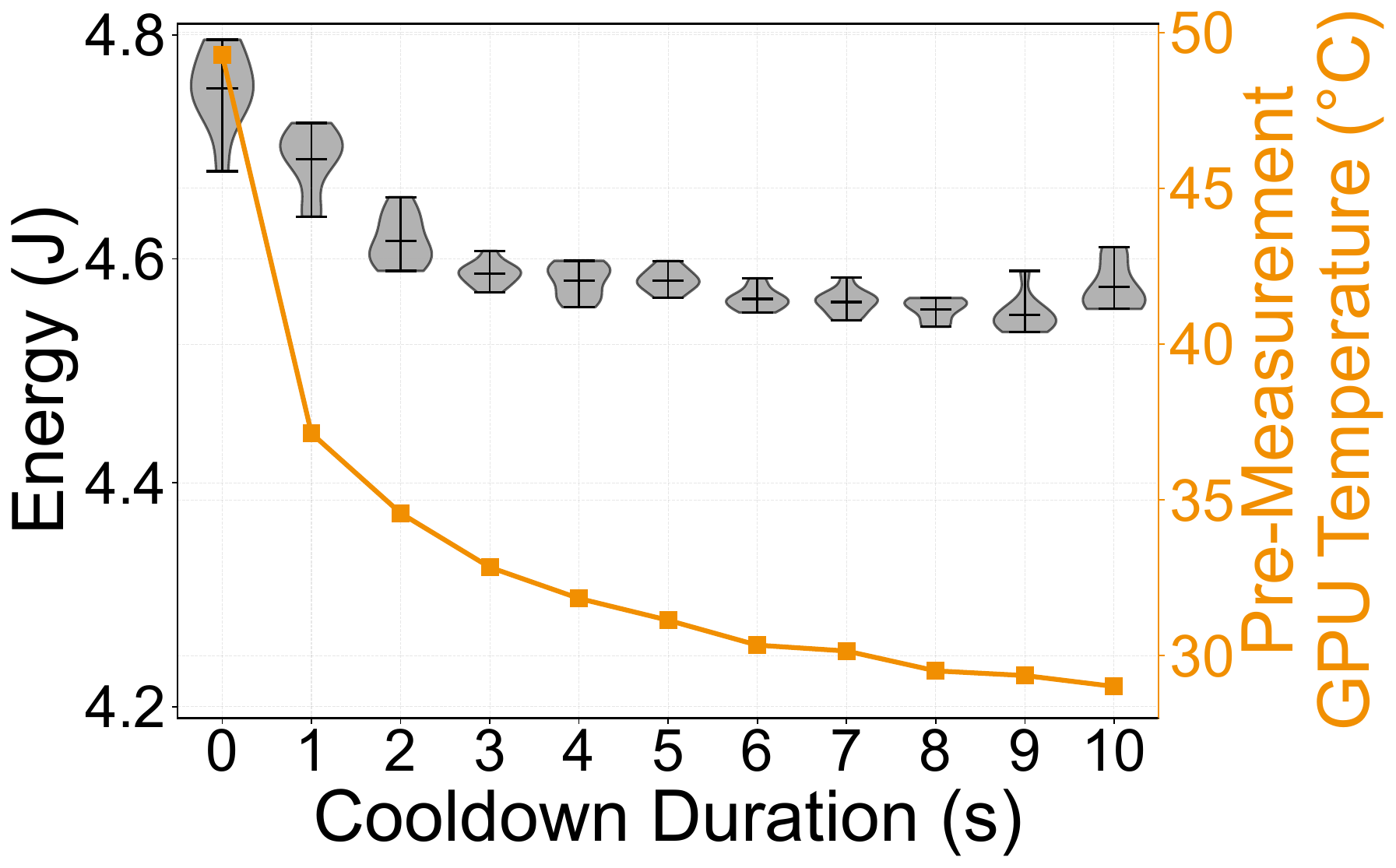}
    }
    \caption{\textbf{Impact of changing (a) measurement duration and (b) cooldown duration for the Thermally Stable Profiler. We report the distribution of energy across 10 repeated trials, along with the average GPU temperature before and after measurement.}}
    \label{fig:profiling}
\end{figure}

\paragraph{Measurement duration.}
We fix the cooldown duration to 5 seconds and vary the measurement window from 1 second to 10 seconds, each with 10 repeated profiling trials.
Figure~\ref{fig:profiling-duration} shows the distribution of measured energy consumption under different measurement windows, together with the average GPU temperature \emph{after} each measurement.
With very short measurement windows (e.g., below 2 seconds), energy measurements exhibit large variability due to the 100 ms NVML counter update interval, and have a lower mean because the GPU has not fully warmed up.
Energy measurement stabilizes from 5 seconds onward, which is why we chose 5 seconds as our measurement duration.

\paragraph{Cooldown duration.}
Cooldown duration is the waiting time between profiling consecutive partitions.
We fix the measurement duration to 5 seconds and vary the cooldown duration from 0 second (no cooldown) to 10 seconds, with 10 repeated profiling trials.
Figure~\ref{fig:profiling-cooldown} shows the distribution of measured energy consumption under different cooldown durations, along with the average GPU temperature \emph{before} each measurement.
We observe that mean energy consumption strongly correlates with the GPU temperature at the start of measurement, making sufficient cooldown essential for accurate measurement.
Temperature and measurements stabilize from 5 seconds onward, so we cool down for 5 seconds between partitions.
We note that the optimal cooldown duration would require tuning for different computing environments.

  \section{Related Work}\label{sec:related}

\subsection{ML Energy Optimization}

The energy consumption of ML workloads has attracted extensive attention, prompting a growing body of work on energy measurement and optimization~\cite{dynamollm-hpca25,mlenergy-benchmark-neurips25,mlenergy-v3-arxiv26,muserve-atc24,multirequest-energy-pomacs26,zeus-nsdi23,perseus-sosp24,envpipe-atc23,llm-training-energy-euromlsys26}.
In particular, the time--energy tradeoff frontier has been a key tool for reasoning about performance and energy efficiency.
For LLM serving, DynamoLLM~\cite{dynamollm-hpca25} explores time--energy tradeoffs via model parallelism and GPU frequency scaling under latency constraints. 
The ML.ENERGY benchmark~\cite{mlenergy-benchmark-neurips25} characterizes the time--energy frontier across deployment configurations to guide latency-aware energy optimization. 
On the training side, Zeus~\cite{zeus-nsdi23} identifies job-level time--energy tradeoffs on recurring training workloads.
Perseus~\cite{perseus-sosp24} dynamically scales GPU frequency per microbatch to optimize energy under pipeline iteration time deadlines.
However, these approaches often yield suboptimal frontiers because they overlook the impact of low-level execution scheduling, such as communication and computation overlap.
Jayaweera et al.~\cite{tile-selection-cgo24} propose energy-aware tile size selection for GPU kernels, but do not consider interactions with system-level latency targets, limiting the potential for energy optimization.

\subsection{Communication Scheduling}

As model scale grows, communication incurs increasingly significant overhead, motivating extensive work on overlapping communication with computation~\cite{nanoflow-osdi25,domino-arxiv24,concerto-asplos25,centauri-asplos24,coconet-asplos22,overlap-decomposition-asplos23,tilelink-mlsys26,tritondistributed-arxiv25,syncopate-osdi26,flux-arxiv24,tetriserve-asplos26,deepseek-v3-arxiv24,tokenweave-mlsys26,vllm-dbo-docs,megascale-nsdi24,megatron-tp-overlap}. 
Nanobatching is a key technique that removes data dependencies and enables overlap opportunities. 
In LLM serving, NanoFlow~\cite{nanoflow-osdi25}, TokenWeave~\cite{tokenweave-mlsys26}, and vLLM Dual Batch Overlap~\cite{vllm-dbo-docs} improve throughput via nanobatching with fine-grained kernel overlap. 
For training, DeepSpeed Domino~\cite{domino-arxiv24} applies nanobatching to overlap tensor-parallel communication, while DeepSeek DualPipe~\cite{deepseek-v3-arxiv24} pipelines forward and backward nanobatches for communication overlap. 
Concerto~\cite{concerto-asplos25} optimizes communication scheduling from a compiler perspective and adopts partial nanobatching when data dependencies are present.
Other hand-crafted kernels fuse communication into computation~\cite{flux-arxiv24,tilelink-mlsys26,tritondistributed-arxiv25,megatron-tp-overlap}, but are tightly tied to specific parallelism strategies and operators.
For example, Megatron-LM supports tensor-parallel communication overlap~\cite{megatron-tp-overlap}, which relies on its native sequence parallelism~\cite{activation-recomputation-mlsys23}.
From the energy perspective, these approaches reduce static energy by reducing time, but do not explicitly optimize energy in their design.
Kareus reveals the energy impact of communication scheduling and generalizes the nanobatching strategy to enable fine-grained SM allocation and launch-timing control, further minimizing static energy while jointly reducing dynamic energy by optimizing GPU frequency.

  \section{Conclusion}

We present Kareus, an execution-aware energy optimizer for large model training.
Kareus demonstrates that SM allocation, kernel launch timing, and GPU frequency jointly influence static and dynamic energy, and that the optimal time--energy frontier cannot be attained by optimizing them in isolation.
Building on this insight, Kareus introduces a partitioned overlap execution model and a multi-objective Bayesian optimization framework that jointly searches over execution schedules, exposing a strictly better time--energy tradeoff frontier than prior systems.

Looking forward, we believe energy-aware execution scheduling should be a first-class concern in ML systems, not an afterthought.
As AI scaling continues and energy constraints tighten for gigawatt-scale AI datacenters, systems that co-optimize computation, communication, and power will be essential, not just for cost savings but for enabling training runs that would otherwise be infeasible.
Kareus takes a step in this direction, and we hope it spurs further work on energy as a core system design metric.

  \section*{Acknowledgements}
We would like to thank the OSDI reviewers, our shepherd, and SymbioticLab members for their insightful feedback.
This work was supported in part by NSF grants CCF-2450085 and CNS-2106184, DARPA ML2P Award HR0011-26-9-E190, and by grants from Cisco, Ford, Mozilla Foundation, and Laude Institute.
Jae-Won Chung is additionally supported by the Kwanjeong Educational Foundation.

  \label{EndOfPaper}

	\balance
	{
		\bibliographystyle{plain}
		\bibliography{ref}
	}
	\clearpage
	\nobalance

	\appendix

\section{Energy Efficiency of Constant Frequency}\label{apdx:frequency-theorem}

Below, we formalize why steady GPU frequency minimizes energy at a fixed average frequency.

\begin{theorem}[Energy Efficiency of Constant Frequency]
Let $f(t)$ denote the GPU frequency over a time interval $[0, T]$, and let $\bar{f} = \frac{1}{T} \int_0^T f(t) \, dt$ be its time-average. Under the following assumptions:
\begin{enumerate}
    \item Dynamic power scales cubically with frequency: $P_{\text{dyn}}(t) = k \cdot f(t)^3$ for some constant $k > 0$.
    \item Static power is constant: $P_{\text{static}}(t) = P_s$ for some constant $P_s \geq 0$.
    \item Execution time depends only on average frequency: workloads with the same average frequency $\bar{f}$ complete in the same time $T$. This is supported by empirical observations from Kareus optimization results.
\end{enumerate}
Then the total energy consumption is minimized when frequency is held constant at $\bar{f}$.
\end{theorem}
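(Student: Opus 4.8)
The plan is to reduce the claim to a single application of Jensen's inequality. First I would write the total energy as the sum of its dynamic and static parts,
\[
E = \int_0^T k\,f(t)^3\,dt + \int_0^T P_s\,dt = k\int_0^T f(t)^3\,dt + P_s T.
\]
By Assumption~3, every schedule sharing the average frequency $\bar f$ completes over the same interval length $T$, so the static contribution $P_s T$ is a constant that the choice of frequency profile cannot influence. The optimization therefore collapses to minimizing the dynamic energy $k\int_0^T f(t)^3\,dt$ over all profiles $f$ with fixed mean $\bar f$.

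Next I would invoke convexity. Since frequency is physically non-negative, I restrict attention to $f(t)\ge 0$, where the map $x\mapsto x^3$ is convex. Jensen's inequality in integral form, taken against the uniform probability measure $dt/T$ on $[0,T]$, then gives
\[
\bar f^{\,3} = \left(\frac{1}{T}\int_0^T f(t)\,dt\right)^3 \le \frac{1}{T}\int_0^T f(t)^3\,dt.
\]
Multiplying through by $kT$ yields $k\,\bar f^{\,3}\,T \le k\int_0^T f(t)^3\,dt$, and the left-hand side is precisely the dynamic energy of the constant-frequency schedule $f(t)\equiv\bar f$ run over $[0,T]$. Adding back the fixed static term $P_s T$ shows that the constant schedule attains the minimum total energy. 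Equality in Jensen's inequality holds if and only if $f$ equals $\bar f$ almost everywhere, which additionally identifies the constant schedule as the unique minimizer up to null sets.

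The argument is elementary, so the substantive work lies not in the inequality itself but in the modeling assumptions that license it. The step I would be most careful about is the use of Assumption~3 to hold $T$ fixed across competing profiles: without it, a slower low-frequency schedule that stretches $T$ would pay more static energy, and the dynamic/static tradeoff would no longer decouple so cleanly, so the reduction to minimizing $\int_0^T f(t)^3\,dt$ would fail. I would also flag the restriction to non-negative frequencies, which is what guarantees convexity of $x\mapsto x^3$ on the relevant domain (the cube is concave for negative arguments), and remark that the cubic exponent can be replaced by any exponent at least $1$ without altering the conclusion.
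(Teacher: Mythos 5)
Your proof is correct and follows essentially the same route as the paper's: decompose total energy into dynamic and static parts, use Assumption~3 to fix the static term $P_s T$, and apply Jensen's inequality to the convex map $x \mapsto x^3$ under the uniform measure $dt/T$ to bound the dynamic term, with equality exactly when $f \equiv \bar{f}$ almost everywhere. Your added remarks on the non-negativity of $f$ and the generalization to any exponent at least $1$ are sound but do not change the argument.
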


\begin{proof}
Total energy is the sum of dynamic and static energy:
\[
E_{\text{total}} = E_{\text{dyn}} + E_{\text{static}} = \int_0^T k \cdot f(t)^3 \, dt + P_s \cdot T.
\]

By Assumption~3, $T$ is identical for both the fluctuating and constant frequency cases. Thus, static energy $E_{\text{static}} = P_s \cdot T$ is equal in both cases.

For dynamic energy, consider the function $g(x) = x^3$, which is convex for $x > 0$ since $g''(x) = 6x > 0$. By Jensen's Inequality:
\[
g(\bar{f}) = \bar{f}^3 \leq \frac{1}{T} \int_0^T f(t)^3 \, dt.
\]

Multiplying both sides by $kT$:
\[
E_{\text{dyn}}^{\text{constant}} = kT \cdot \bar{f}^3 \leq k \int_0^T f(t)^3 \, dt = E_{\text{dyn}}^{\text{fluctuating}}.
\]

Equality holds if and only if $f(t) = \bar{f}$ almost everywhere, i.e., when frequency is constant. Therefore:
\[
E_{\text{total}}^{\text{constant}} \leq E_{\text{total}}^{\text{fluctuating}},
\]
with strict inequality when $f(t)$ varies over time.
\end{proof}

\paragraph{Implications for GPU frequency scaling.}
This theorem explains why frequency fluctuation due to power limits can be energy-inefficient.
When a GPU operates at a high frequency and triggers power throttling, the instantaneous frequency fluctuates, while the time-averaged frequency remains close to what could be achieved by operating steadily at a lower frequency.
Consequently, the execution time---and thus static energy consumption---are the same as those under steady lower-frequency operation.
However, this fluctuation incurs higher dynamic energy consumption because dynamic power is a strictly convex function of frequency. 
By Jensen's Inequality, any variance in frequency increases the expected dynamic energy consumption.

\section{Global Solution Space}\label{apdx:launch-timing-dp}

The global solution space of combining GPU frequency, SM allocation, and kernel launch timing is extremely large. 
In this section, we illustrate the space using the NVIDIA A100 GPU as an example.

\paragraph{GPU frequencies.}
Supported GPU frequencies by A100 are from 210 MHz to 1,410 MHz at a stride of 15 MHz.
We restrict the search space to 900 MHz--1,410 MHz (35 choices) because lowering the frequency below 900 MHz no longer reduces energy.\footnote{Power reduction slows down but time increases, leading to higher energy.}

\paragraph{SM allocations.}
A100 has 108 SMs, and we restrict the search space to up to 30 SMs because allocating additional SMs beyond this empirically no longer improved communication latency.
This leaves 30 choices for SM allocation.

\paragraph{Kernel launch timing.}
Finally, given a sequence of computation and communication kernels, possible execution orders can be expressed as a recurrence relation, which enumerates the number of subproblems.
We elaborate on this formulation in the following paragraphs.
In summary, for a typical LLM composed of Transformer blocks, this can result in 81 possible groupings, leading to a \emph{global solution space} with in total 85,050 candidates.

\paragraph{Formulation of launch timing.}
Kareus generalizes the nanobatching scheme to eliminate dependencies between computation and communication, which enables overlap and results in two operation sequences:
\[
\mathcal{S}_1 = \{ O_0, O_1, \dots, O_i \}, 
\qquad
\mathcal{S}_2 = \{ O_0', O_1', \dots, O_j' \}.
\]
where operations within the same sequence must be executed in order, while operations across sequences are independent and may overlap.
We regard the consecutive communication operations as a single operation.
The time--energy frontier of the launch timing schedule can be formulated as a dynamic programming (DP) recurrence:
\[
\begin{aligned}
\mathcal{P}(i,j)
&= \min_{\text{Pareto}} \\
&\left\{
\begin{aligned}
& (E(O_i) + E(\mathcal{P}(i+1,j)),\; T(O_i) + T(\mathcal{P}(i+1,j))) \\[6pt]
& (E(O_j') + E(\mathcal{P}(i,j+1)),\; T(O_j') + T(\mathcal{P}(i,j+1))) \\[6pt]
& (E(O_i \parallel O'_{j..j+k}),\; T(O_i \parallel O'_{j..j+k})) + \mathcal{P}(i+1, j+k) \\[6pt]
& (E(O_j' \parallel O_{i..i+k}),\; T(O_j' \parallel O_{i..i+k})) + \mathcal{P}(i+k, j+1)
\end{aligned}
\right.
\end{aligned}
\]

Here, $O_i \parallel O'_{j..j+k}$ means overlapping operation $O_i$ with a subsequence $\{O'_j,...,O'_{j+k}\}$; the notation is symmetric for the opposite case, $O'_j \parallel O_{i..i+k}$.
The cost functions $E(\cdot)$ and $T(\cdot)$ account for the interference when overlapping.

This recurrence enumerates the time and energy of all single operations and all feasible overlap patterns between the two sequences. For a typical Transformer block with 9 computation operations and 1 AllReduce, if we restrict overlap to occur only between communication and computation and cap the maximum overlap length at 9 (assuming the communication is no longer than a full Transformer block), there are 81 possible overlap patterns. Including the non-overlapped cases, this yields a total of 91 subproblems.

\section{Adaptation of MBO Hyperparameters}\label{apdx:mbo-hyperparameters}
In this section, we describe how Kareus configures MBO hyperparameters. Some are adapted across different partitions based on partition complexity, while others remain fixed.

\paragraph{Search space.}
For GPU frequency, we restrict the search space to 900–1,410 MHz with a stride of 30 MHz, since the time and energy differences between adjacent 15 MHz settings are marginal.
For SM allocation, the search space is determined by the communication group size. If the number of GPUs in a communication group is fewer than 4, we search SM allocations from 1 to 20 with a stride of 1. If the group size is 4 or larger (e.g., the common cases of 4 and 8), we search SM allocations from 3 to 30 with a stride of 3.
For launch timing, we enumerate all computation operators within a partition and exclude options that always lead to exposed communication, such as launching AllReduce from Linear2 in Figure~\ref{fig:analysis-vary-sm-2}.

\paragraph{Sample size.}
Our goal is to control the total profiling time while preserving the quality of the time--energy frontier. We adopt a relatively large initialization set for informative exploration and scale the batch search budget based on partition complexity. 
Specifically, we classify partitions into three categories: \emph{small} partitions containing only one computation, \emph{medium} partitions containing 2–3 computations, and \emph{large} partitions containing more than three computations.
We then configure the sample sizes as follows: initial sample size $N_{\text{init}} = 36$, maximum number of batches $B_{\max} = 3$, batch size $k = 16$ for small partition; $N_{\text{init}} = 48$, $B_{\max} = 4$, $k = 16$ for medium partition; $N_{\text{init}} = 96$, $B_{\max} = 4$, $k = 32$ for large partition.
We set the proportions of the total energy pass, dynamic energy pass, static energy pass, and uncertainty pass to 0.4, 0.2, 0.2, and 0.2, respectively.
As shown in Section~\ref{sec:evaluation-bo}, the MBO overhead of Kareus is controlled within two hours on average. 

\paragraph{XGBoost hyperparameters.}
Since the configuration space is three-dimensional, we adopt XGBoost hyperparameter settings commonly used for low-dimensional regression. To ensure fast and stable convergence while avoiding overfitting, we use relatively shallow trees with $max\_depth = 6$, a high learning rate $\eta = 0.3$, and a modest model capacity with $num\_boost\_round = 100$.
For the bootstrap ensemble, we set the ensemble size to 5, use a bootstrap sampling fraction of 0.8, and vary the random seed across bootstrap resamples.

\paragraph{HV reference point.}
At each batch iteration, we compute the HV reference point using values slightly worse than the worst observed measurements to ensure boundedness. Specifically, we set $r = \bigl(1.1 \times \max T(x),\; 1.1 \times \max E(x)\bigr)$.

\paragraph{Stopping conditions.}
The MBO algorithm is terminated when either a sufficient number of samples has been collected or the objective has converged, according to the following criteria:
(1) Stop after a fixed number of batches \(B_{\max}\).
(2) After each batch, compute the dominated hypervolume of the measured frontier. Stop if the moving average of the relative HV improvement over the last \(R\) batches falls below \(\varepsilon\).

We set the window size in the stopping criterion to $R = 2$ and the convergence threshold to $\varepsilon = 10^{-3}$. Energy and time are normalized throughout the MBO process, ensuring stable stopping behavior across partitions.

\section{End-to-End Results}\label{apdx:frontier-e2e}

Figure~\ref{fig:apdx-frontiers-testbed} shows the iteration time--energy frontiers measured on the testbed GPUs for all model configurations, complementing the evaluation in Section~\ref{sec:eval-e2e-frontier}.

\begin{figure}[t!]
  \centering
  \includegraphics[width=0.85\columnwidth]{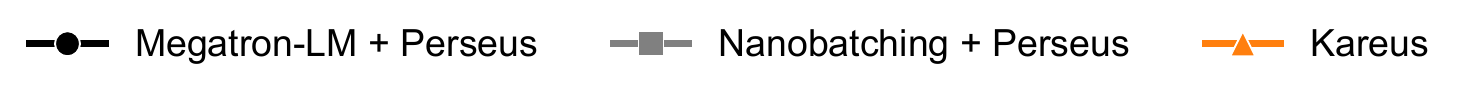}\\[-1.25em]
  \subfloat[Llama 3.2 3B, CP=1, TP=8,\\$\mu$BS=8, Seq=4096]{
    \includegraphics[width=0.49\columnwidth]{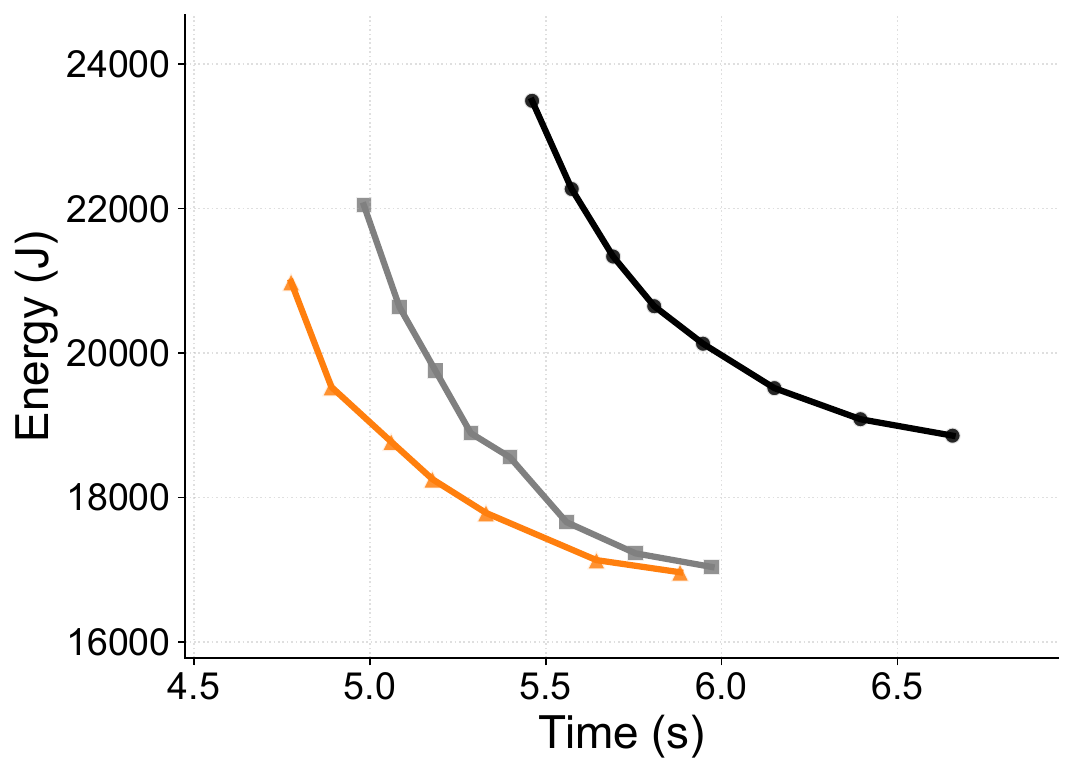}
  }%
  \subfloat[Llama 3.2 3B, CP=2, TP=4,\\$\mu$BS=8, Seq=4096]{
    \includegraphics[width=0.49\columnwidth]{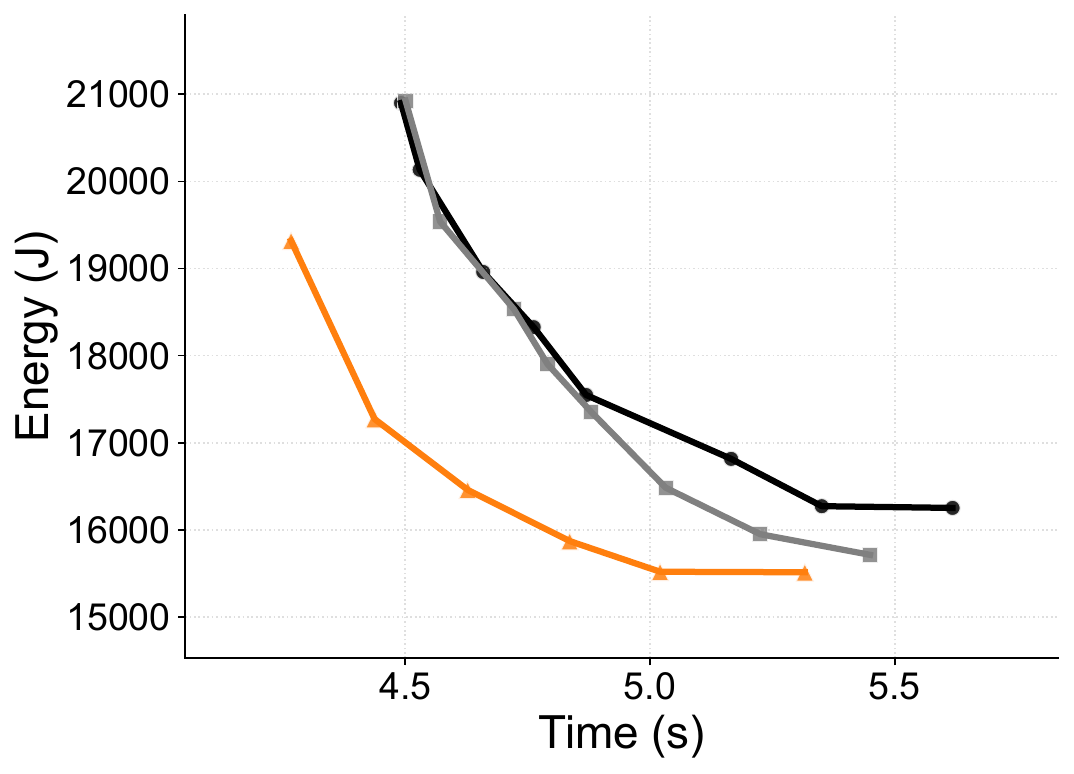}
  }\\
  \subfloat[Llama 3.2 3B, CP=2, TP=4,\\$\mu$BS=8, Seq=8192]{
    \includegraphics[width=0.49\columnwidth]{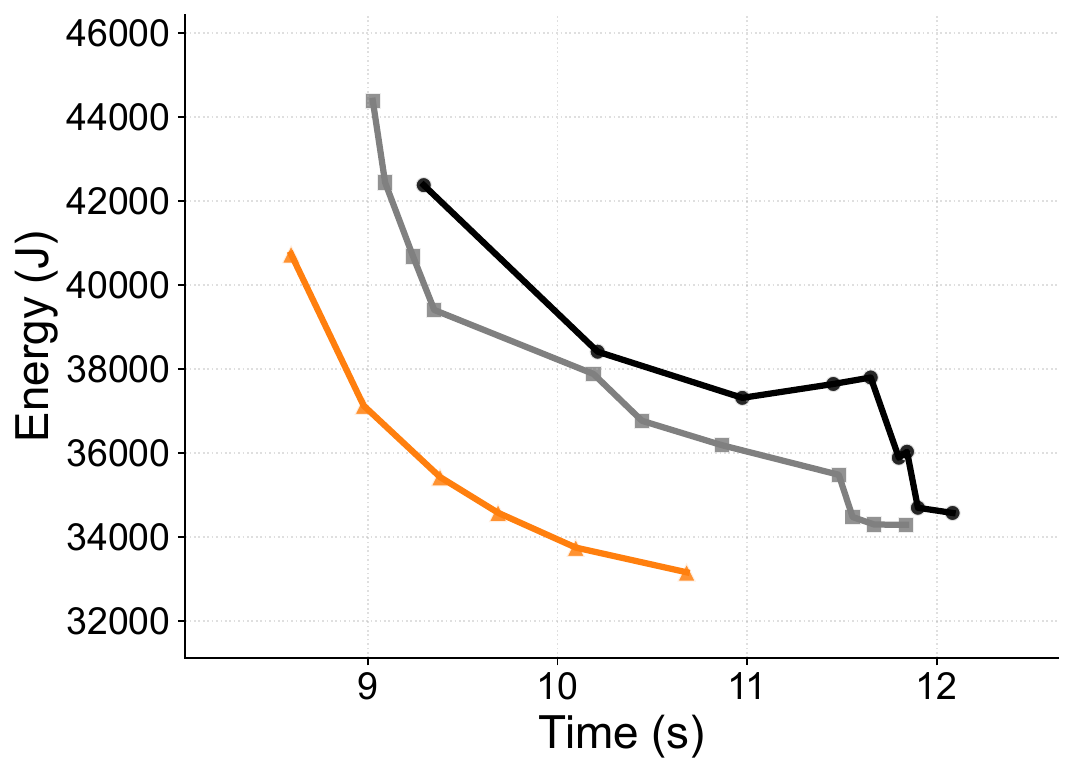}
  }%
  \subfloat[Llama 3.2 3B, CP=2, TP=4,\\$\mu$BS=16, Seq=4096]{
    \includegraphics[width=0.49\columnwidth]{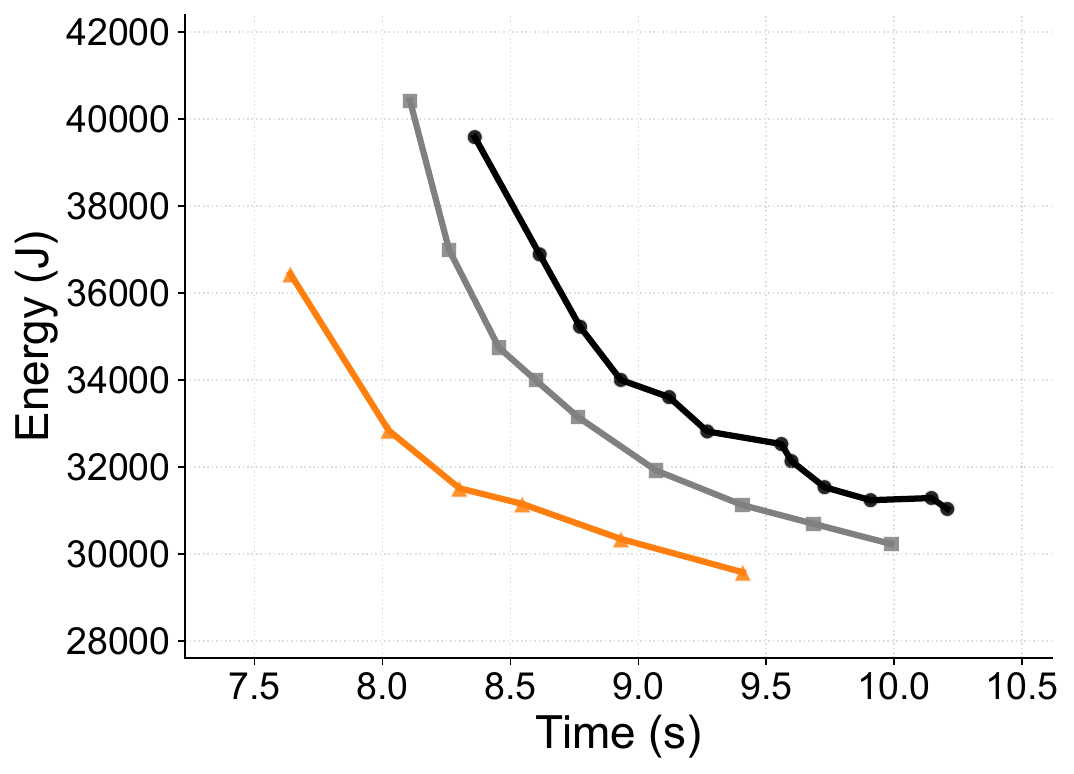}
  }\\
  \subfloat[Microbatch size = 8]{
    \includegraphics[width=0.49\columnwidth]{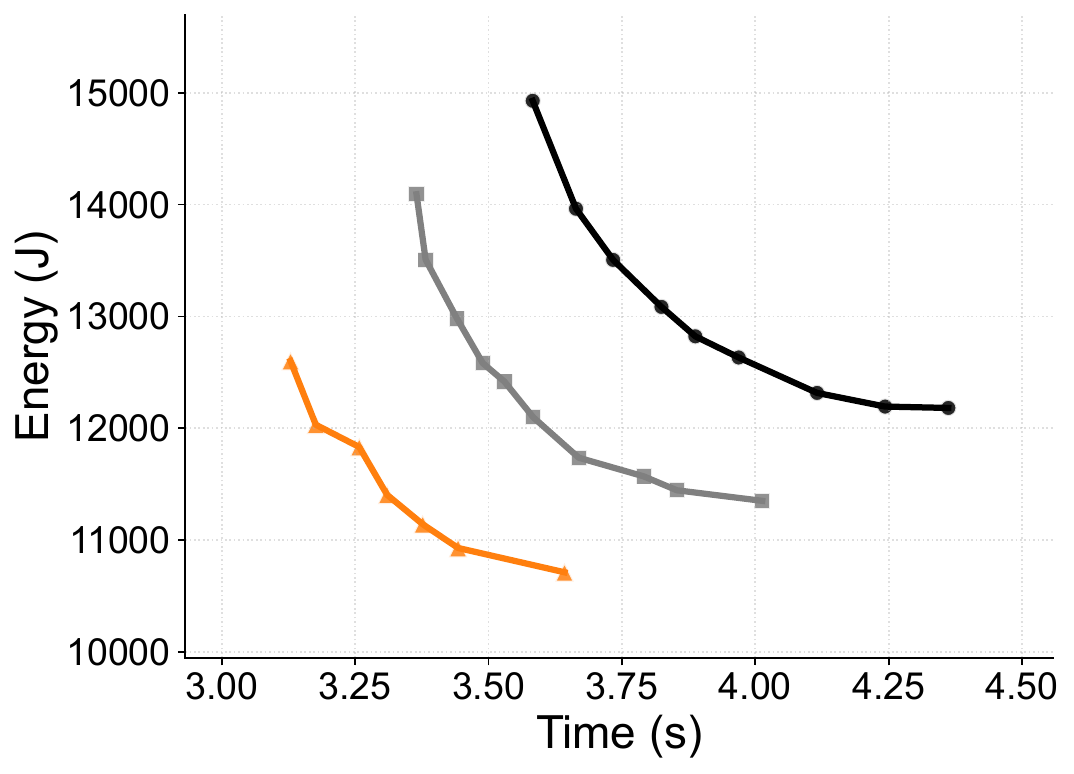}
  }%
  \subfloat[Qwen 3 1.7B, CP=1, TP=8,\\$\mu$BS=8, Seq=8192]{
    \includegraphics[width=0.49\columnwidth]{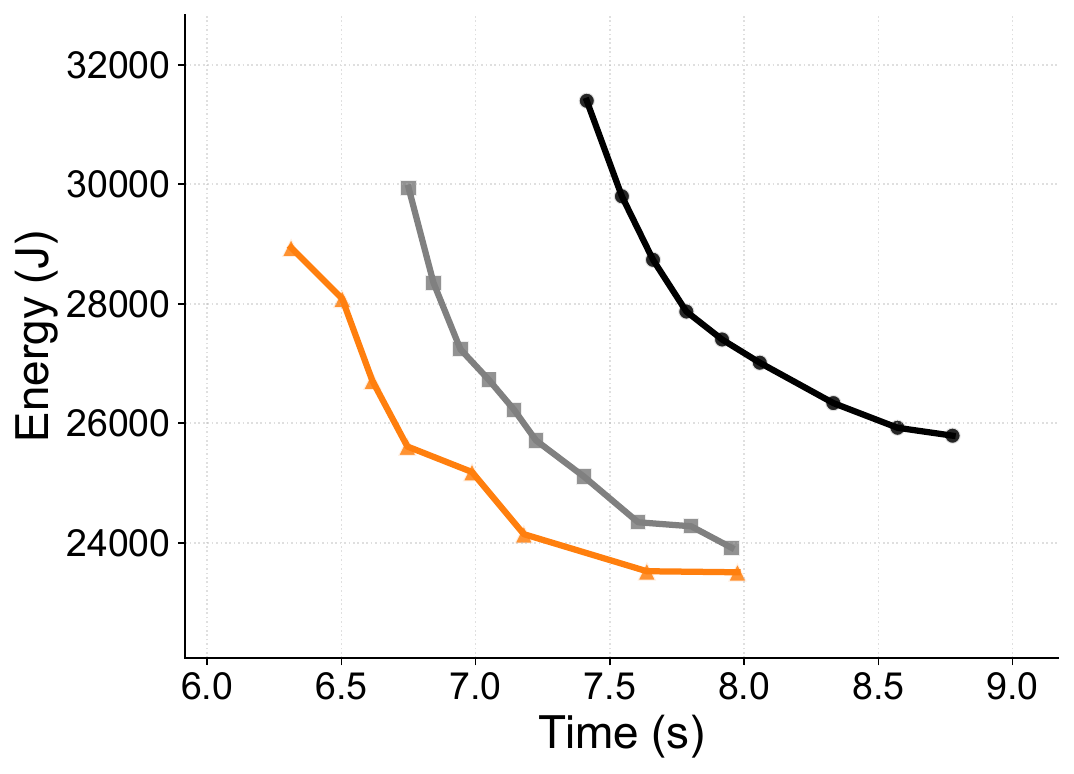}
  }\\
  \subfloat[Qwen 3 1.7B, CP=1, TP=8,\\$\mu$BS=16, Seq=4096]{
    \includegraphics[width=0.49\columnwidth]{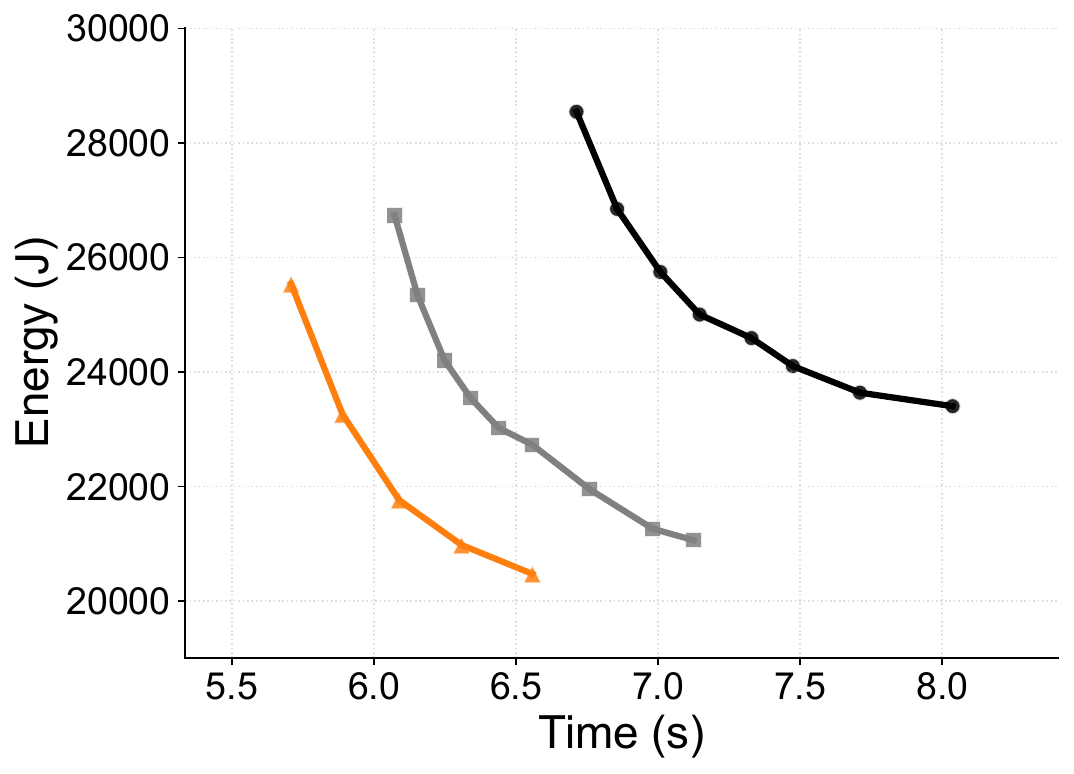}
  }%
  \subfloat[Qwen 3 1.7B, CP=2, TP=4,\\$\mu$BS=8, Seq=4096]{
    \includegraphics[width=0.49\columnwidth]{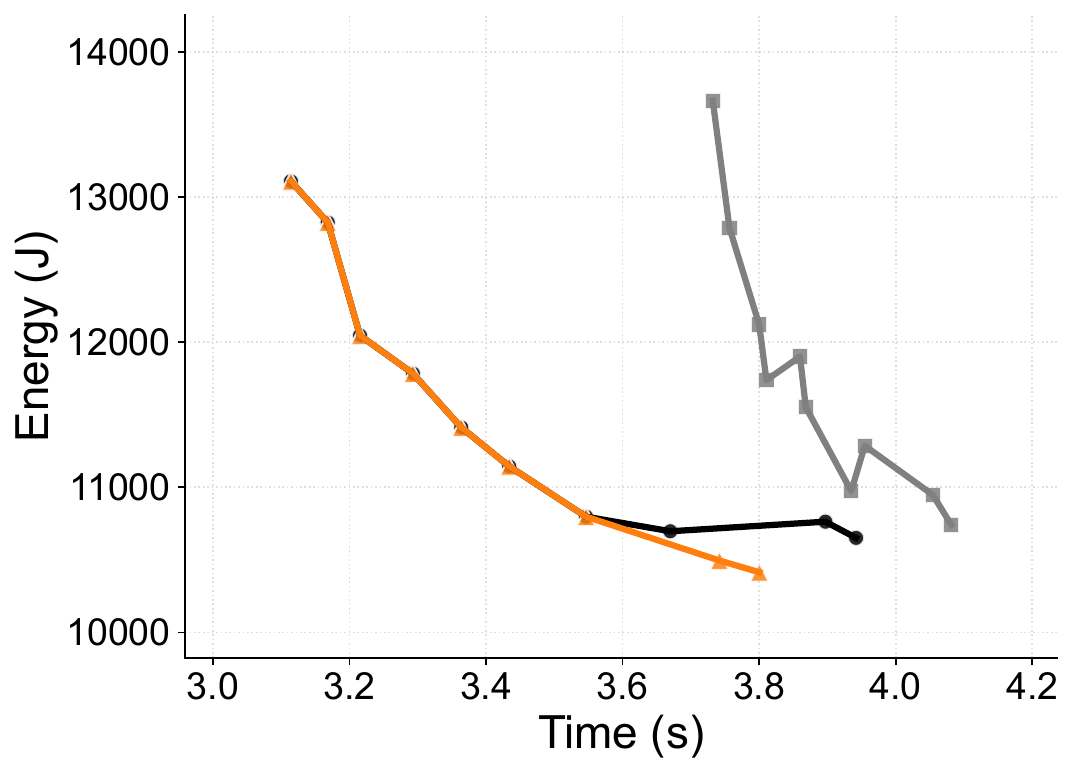}
  }\\
  \subfloat[Qwen 3 1.7B, CP=2, TP=4,\\$\mu$BS=8, Seq=8192]{
    \includegraphics[width=0.49\columnwidth]{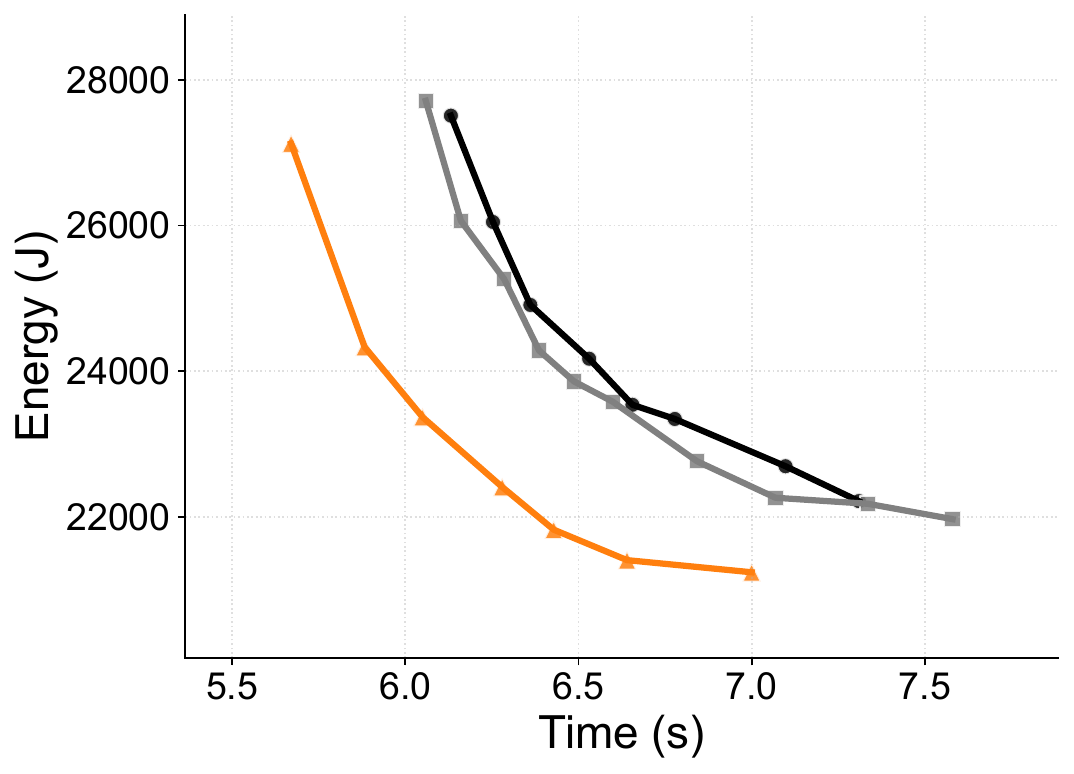}
  }%
  \subfloat[Qwen 3 1.7B, CP=2, TP=4,\\$\mu$BS=16, Seq=4096]{
    \includegraphics[width=0.49\columnwidth]{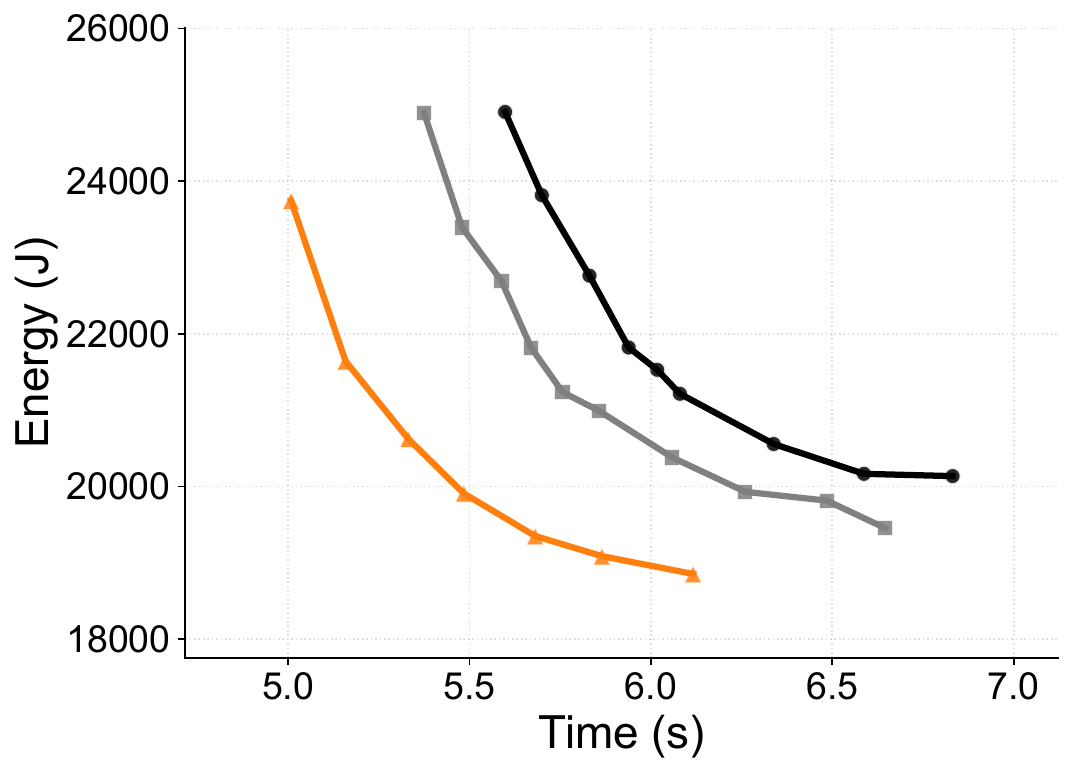}
  }
  \caption{
    [Experiment] Itertaion time--energy frontiers of Megatron-LM + Perseus, Nanobatching + Perseus, and Kareus for all model configurations. CP: Context Parallelism, TP: Tensor Parallelism, $\mu$BS: Microbatch Size, Seq: Sequence Length.
  }\label{fig:apdx-frontiers-testbed}
\end{figure}

\section{Large-Scale Emulation}\label{apdx:frontier-emulation}

Figure~\ref{fig:apdx-frontiers-emulation} shows the iteration time--energy frontiers in large-scale emulation for Llama 3.3 70B across different numbers of microbatches, complementing the evaluation in Section~\ref{sec:evaluation-emulation}.

\begin{figure}[t!]
  \centering
  \includegraphics[width=0.55\columnwidth]{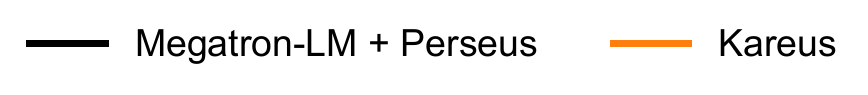}\\[-1em]
  \subfloat[\# Microbatches = 16]{
    \includegraphics[width=0.49\columnwidth]{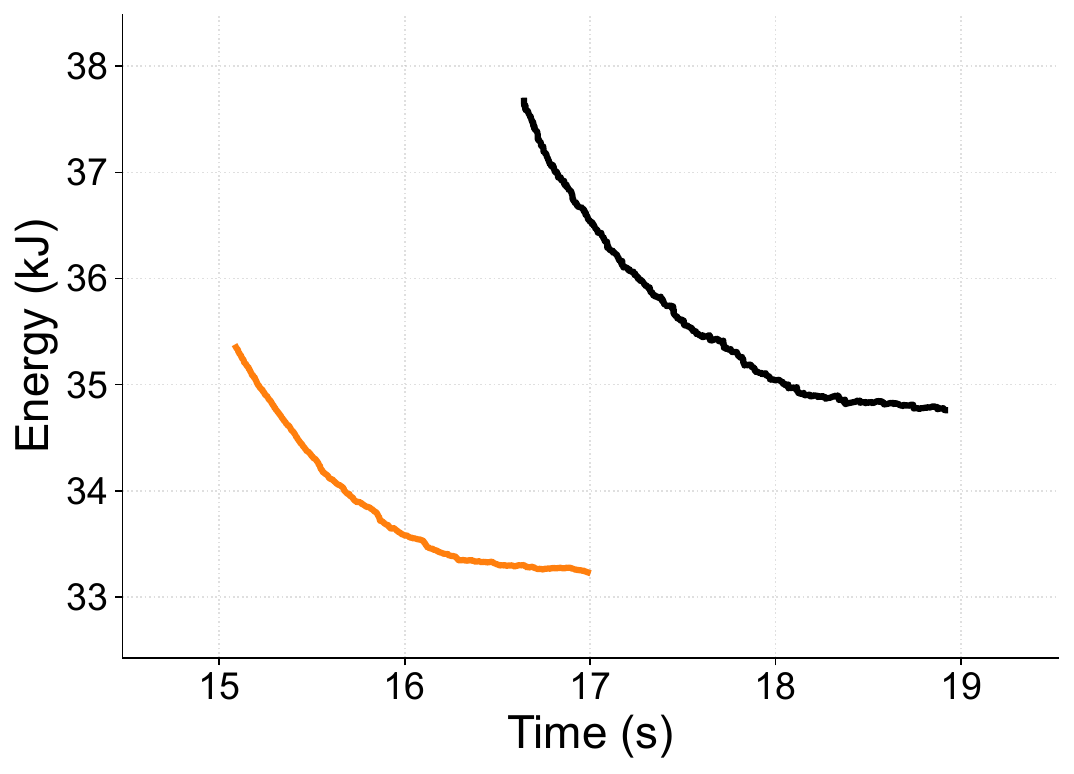}
  }\label{fig:emulation-mb16}%
  \subfloat[\# Microbatches = 32]{
    \includegraphics[width=0.49\columnwidth]{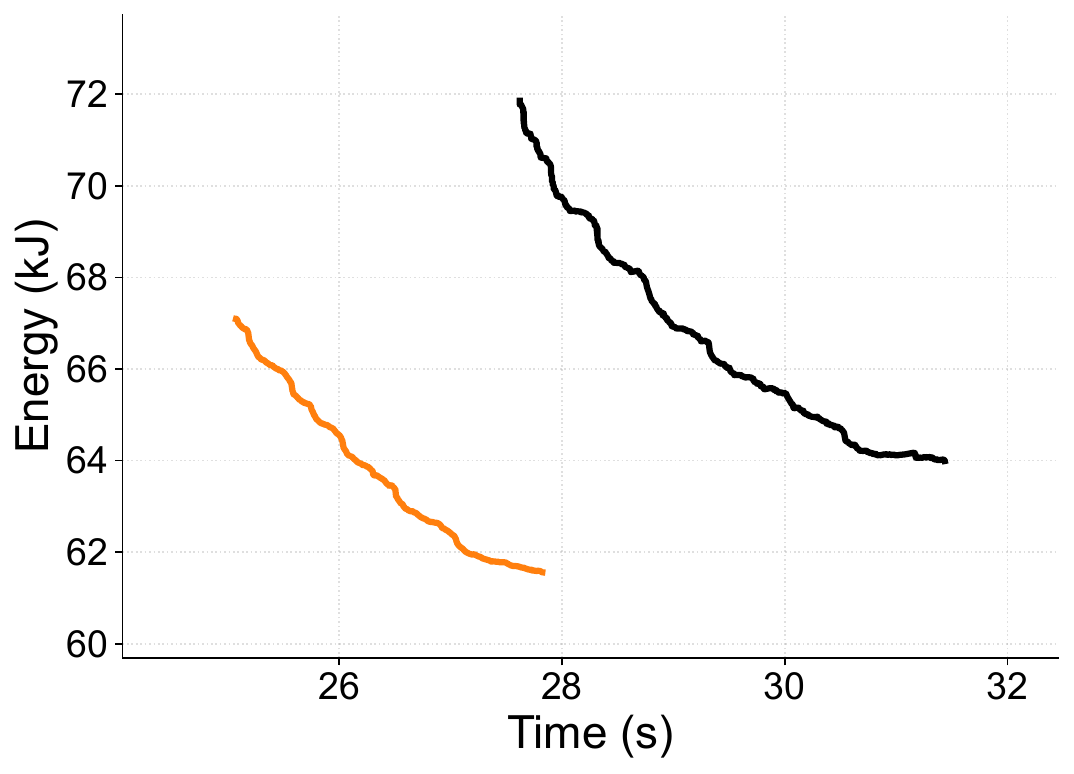}
  }\label{fig:emulation-mb32}\\
  \subfloat[\# Microbatches = 64]{
    \includegraphics[width=0.49\columnwidth]{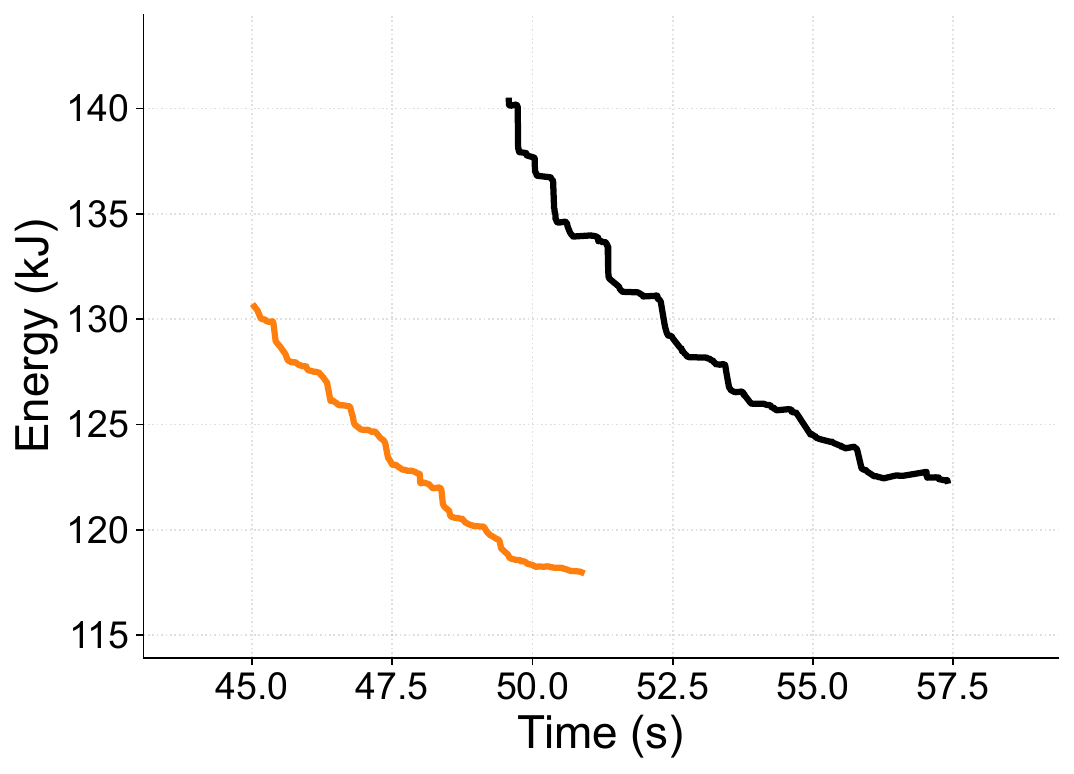}
  }\label{fig:emulation-mb64}%
  \subfloat[\# Microbatches = 128]{
    \includegraphics[width=0.49\columnwidth]{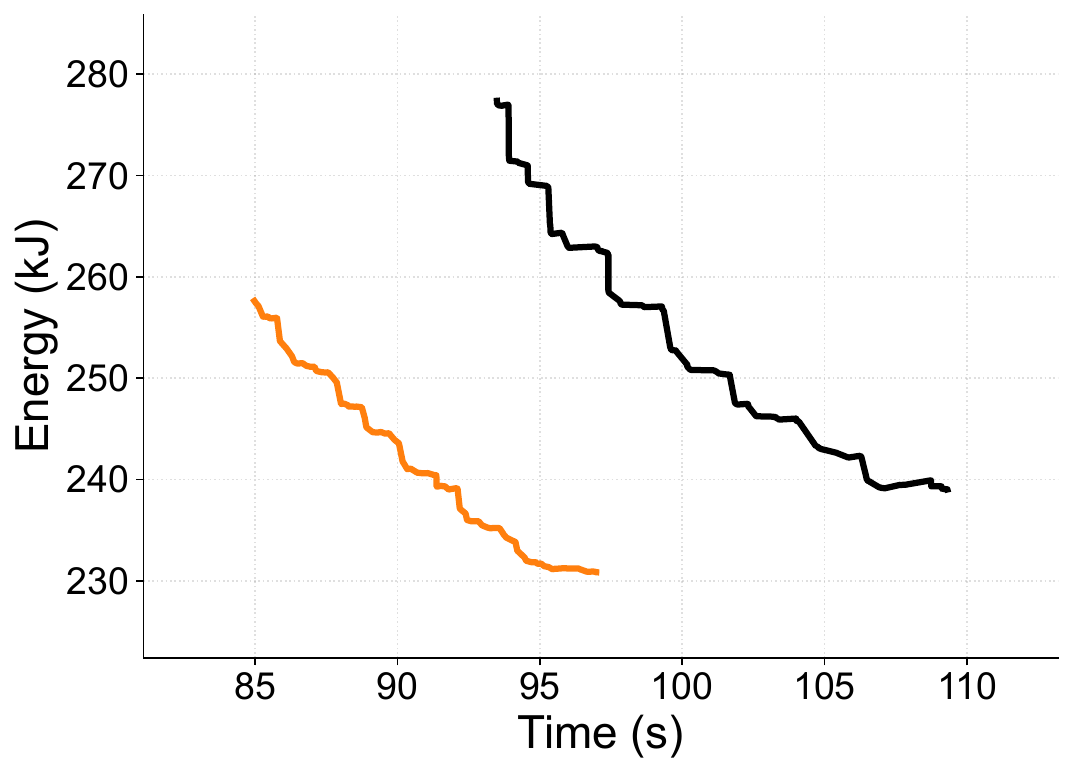}
  }\label{fig:emulation-mb128}
  \caption{
    [Emulation] Iteration time--energy frontiers of Megatron-LM + Perseus and Kareus for Llama 3.3 70B across different numbers of microbatches.
  }\label{fig:apdx-frontiers-emulation}
\end{figure}

\section{Sensitivity to Microbatch Size}\label{apdx:frontier-sensitivity}

Figure~\ref{fig:apdx-frontiers-sensitivity} shows the iteration time--energy frontiers measured on the testbed GPUs for Qwen 3 1.7B with TP=8 and sequence length 4K across different microbatch sizes, complementing the sensitivity study in Section~\ref{sec:evaluation-sensitivity}.

\begin{figure}[t!]
  \centering
  \includegraphics[width=0.55\columnwidth]{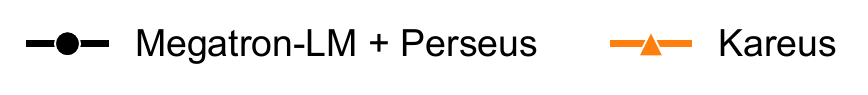}\\[-1em]
  \subfloat[Microbatch size = 8]{
    \includegraphics[width=0.49\columnwidth]{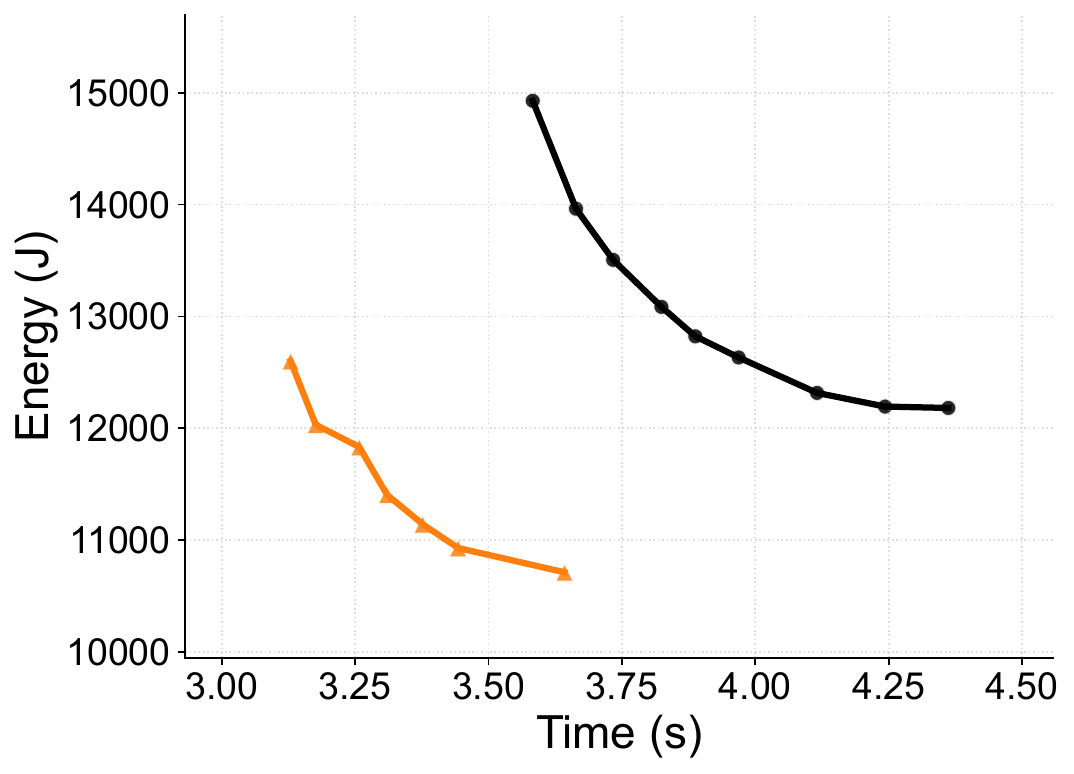}
  }\label{fig:sensitivity-bs8}%
  \subfloat[Microbatch size = 12]{
    \includegraphics[width=0.49\columnwidth]{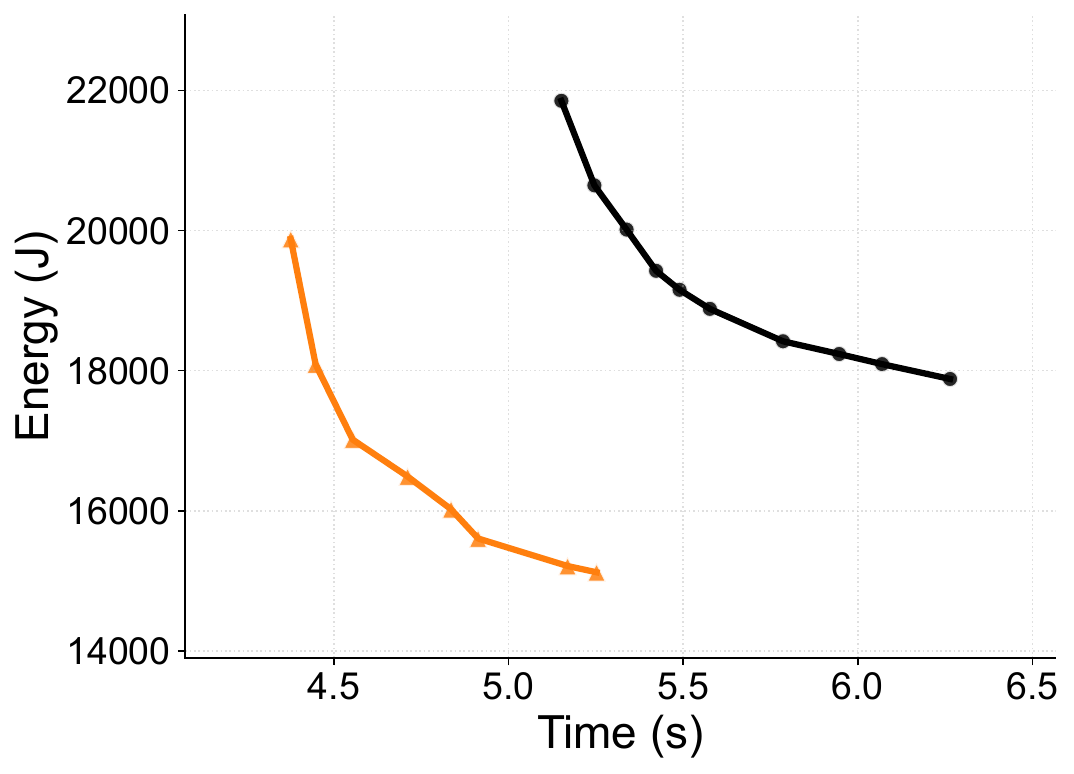}
  }\label{fig:sensitivity-bs12}\\
  \subfloat[Microbatch size = 16]{
    \includegraphics[width=0.49\columnwidth]{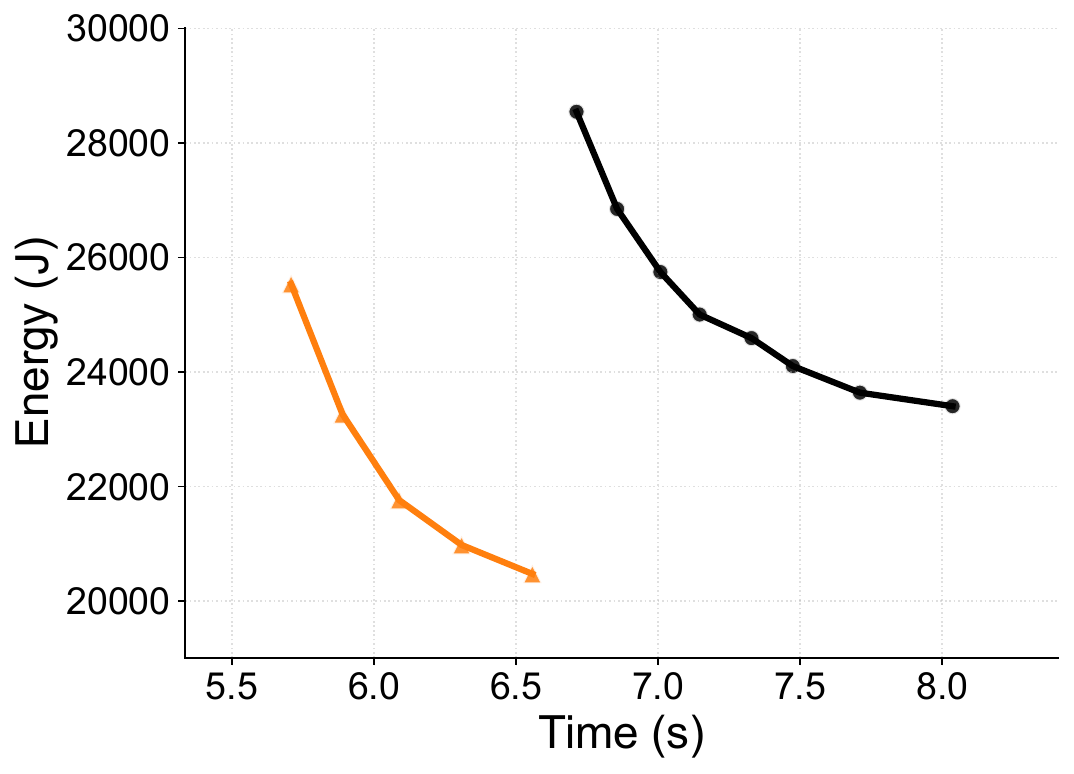}
  }\label{fig:sensitivity-bs16}%
  \subfloat[Microbatch size = 20]{
    \includegraphics[width=0.49\columnwidth]{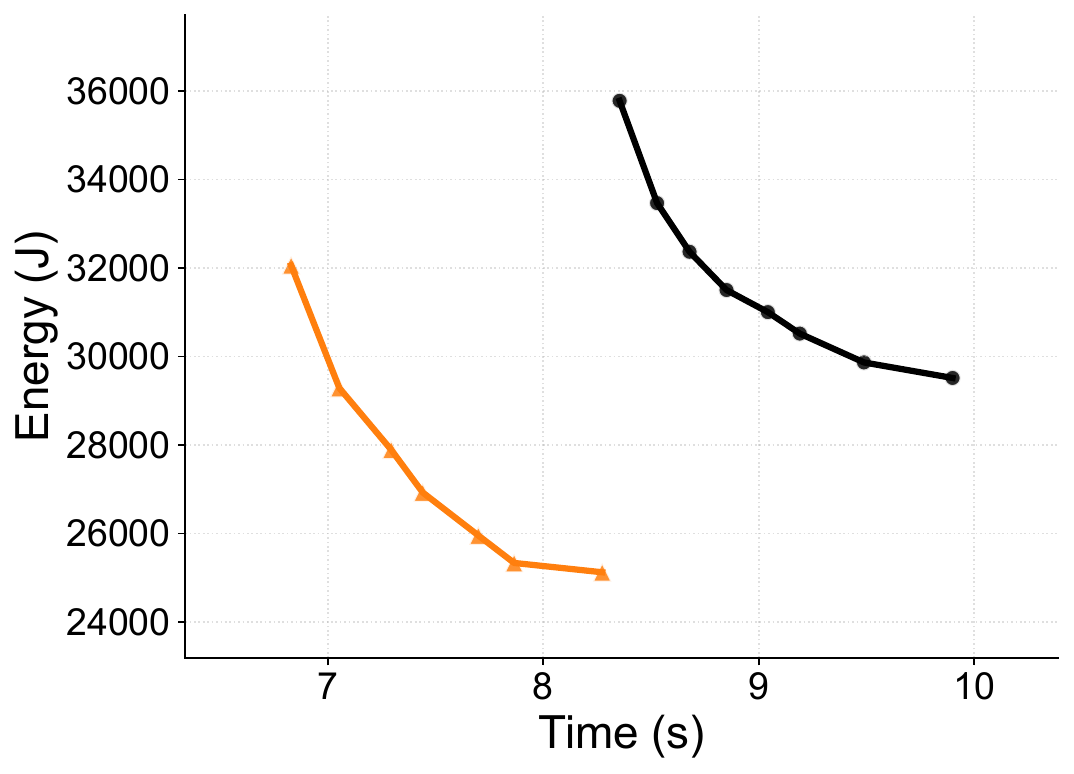}
  }\label{fig:sensitivity-bs20}
  \caption{
    [Experiment] Iteration time--energy frontiers of Megatron-LM + Perseus and Kareus for Qwen 3 1.7B across different microbatch sizes.
  }\label{fig:apdx-frontiers-sensitivity}
\end{figure}

\end{document}